\newtheorem{assumption}{Assumption}
\def \fh {\hat{f}}
\def \B {\mathcal{B}}
\def \x {\mathbf{x}}
\def \v {\mathbf{v}}
\def \H {\mathcal{H}_{\kappa}}
\def \R {\mathbb{R}}
\def \w {\mathbf{w}}
\def \a {\mathbf{a}}
\def \u {\mathbf{u}}
\def \E {\mathrm{E}}
\def \L {\mathcal{L}}
\def \H {\mathcal{H}}
\def \F {\mathcal{F}}
\def \z {\mathbf{z}}
\def \b {\mathbf{b}}
\def \K {\mathcal{K}}
\def \E {\mathbb{E}}
\def \O {O}
\def \BD {\mathsf{B}}
\def \blambda {\boldsymbol{\lambda}}
\def \bmu {\boldsymbol{\mu}}
\begin{document}

\title{Trading Regret for Efficiency: \\ Online Convex Optimization with  Long Term Constraints}

\author{\name Mehrdad Mahdavi \email mahdavim@cse.msu.edu \\
       \name Rong Jin \email rongjin@cse.msu.edu \\
       \name Tianbao Yang \email yangtia1@msu.edu \\
       \addr Department of Computer Science and Engineering\\
       Michigan State University\\
       East Lansing, MI, 48824, USA
}

\editor{}

\maketitle

\begin{abstract}
In this paper we propose efficient algorithms for solving constrained online convex optimization problems. Our motivation stems from the observation that most  algorithms proposed for online convex optimization require a projection onto the convex set $\mathcal{K}$ from which the decisions are made. While  the projection is straightforward for simple shapes (e.g., Euclidean ball), for arbitrary complex sets it is  the main computational challenge and may be inefficient in practice. In this paper, we consider an alternative online convex optimization problem. Instead of requiring that decisions belong to $\mathcal{K}$  for all rounds, we only require that the constraints, which define the set $\mathcal{K}$, be satisfied in the long run.
By turning the problem into an online convex-concave optimization problem, we propose an efficient algorithm which achieves $O(\sqrt{T})$  regret bound and $O(T^{3/4})$ bound on the violation of constraints. Then, we modify the algorithm in order to guarantee that the constraints are satisfied in the long run. This gain is achieved at the price of getting $O(T^{3/4})$  regret bound. Our second algorithm is based on the mirror prox method \citep{nemirovski-2005-prox} to solve variational inequalities which achieves $O(T^{2/3})$ bound for both regret and the violation of constraints  when the domain $\K$ can be described by a finite number of linear constraints. Finally, we extend the results to the setting where we only have partial access to the convex set $\mathcal{K}$ and propose a multipoint bandit feedback algorithm with the same bounds in expectation as our first algorithm.
\end{abstract}

\begin{keywords}
  online convex optimization, convex-concave optimization, bandit feedback, variational inequality
\end{keywords}

\section{Introduction}
Online convex optimization has recently emerged as a primitive framework for designing efficient algorithms for a wide variety of machine learning applications \citep{Cesa-Bianchi:2006:PLG:1137817}. In general, an online convex optimization problem  can be formulated as a repeated game between a learner and an adversary:  at each iteration $t$, the learner first presents a solution $\x_t \in \K$, where $\K \subseteq \R^d$ is a convex domain representing the solution space; it then receives a convex function $f_t(\x): \K \mapsto \R_+$ and suffers the loss $f_t(\x_t)$ for the submitted solution $\x_t$. The objective of the learner is to generate a sequence of solutions $\x_t \in \K, t = 1, 2, \cdots, T$ that minimizes the regret $\mathfrak{R}_T$ defined as

\begin{eqnarray}
\label{eqn:regret}
\mathfrak{R}_T = \sum_{t=1}^T f_t(\x_t) - \min\limits_{\x \in \K} \sum_{t=1}^T f_t(\x).
\end{eqnarray}
Regret measures the difference between the cumulative loss of the learner's strategy and the minimum possible loss had the sequence of loss functions been known in advance and the learner could choose the best fixed action in hindsight. When $\mathfrak{R}_T$ is sub-linear in the number of rounds, that is,  $o(T)$, we call the solution Hannan consistent \citep{Cesa-Bianchi:2006:PLG:1137817}, implying that the learner's average per-round loss approaches the average per-round loss of the best fixed action in hindsight. It is noticeable that  the performance bound must hold for any sequence of loss functions, and in particular  if the sequence is chosen adversarially.

Many successful algorithms have been developed over the past decade to minimize the regret in the online convex optimization. The problem was initiated in the remarkable  work of \citet{DBLP:conf/icml/Zinkevich03} which presents an algorithm  based on gradient descent with projection that guarantees a regret of $\O(\sqrt{T})$ when the set $\K$ is convex  and the loss functions are  Lipschitz continuous within the domain $\K$. In \citet{Hazan:2007:LRA:1296038.1296051} and \citet{DBLP:conf/nips/Shalev-ShwartzK08} algorithms with logarithmic regret bound were proposed for strongly convex loss functions. In particular, the algorithm in \citet{Hazan:2007:LRA:1296038.1296051} is based on online Newton step and covers the  general class of exp-concave loss functions.  Notably, the simple gradient based algorithm also achieves an $\O(\log T)$ regret bound for strongly convex loss functions with an appropriately chosen step size. \citet{DBLP:conf/nips/BartlettHR07} generalizes the results in previous works to the setting where the algorithm can adapt to the curvature  of the loss functions without any prior information. A modern view of these algorithms  casts the problem as the task of following the regularized leader \citep{lecture-notes}. In \citet{DBLP:conf/colt/AbernethyABR09}, using game-theoretic analysis, it has been shown that   both $\O(\sqrt{T})$ for Lipschitz continuous  and $\O(\log T)$ for strongly convex loss functions are tight in the minimax sense. 

Examining the existing algorithms, most of the techniques usually require a projection step at each iteration in order to get back to the feasible region.  For the performance of these online algorithms, the computational cost of the projection step is of crucial importance. To motivate the setting addressed in this paper, let us first examine a popular online learning algorithm for minimizing the regret $\mathfrak{R}_T$ based on the online gradient descent (OGD) method \citep{DBLP:conf/icml/Zinkevich03}. At each iteration $t$, after receiving the convex function $f_t(\x)$, the learner computes the gradient $\nabla f_t(\x_t)$ and updates the solution $\x_t$ by solving the following optimization problem
\begin{eqnarray}
\label{eqn:proj}
    \x_{t+1} = \Pi_{\K}\left(\x_t - \eta \nabla f_t(\x_t)\right) = \mathop{\arg\min}\limits_{\x \in \K} \left \|\x - \x_t + \eta \nabla f_t(\x_t)\right\|^2,
\end{eqnarray}
where  $ \Pi_{\K}(\cdot)$ denotes the projection onto  $\K$ and $\eta > 0$ is a predefined step size. Despite the simplicity of the OGD algorithm, the computational cost per iteration is crucial for its applicability.  For general convex domains, solving the optimization problem in (\ref{eqn:proj}) is an offline convex optimization problem by itself and can be computationally expensive. For example, when one envisions a positive semidefinitive cone in  applications such as distance metric learning and matrix completion, the full eigen-decomposition of a matrix is required to project the updated solutions back into the cone. Recently several efficient algorithms have been developed for projection onto specific domains, for example,  $\ell_1$ ball \citep{DBLP:conf/icml/DuchiSSC08, DBLP:conf/icml/LiuY09}; however, when the domain $\K$ is complex, the projection step  is a more involved task or  computationally burdensome.

To tackle the computational challenge arising from the projection step, we consider an alternative online learning problem. Instead of requiring $\x_t \in \K$, we only require the constraints, which define the convex domain $\K$, to be satisfied in a long run. Then, the online learning problem becomes a task to find a sequence of solutions $\x_t, t \in [T]$ that minimizes the regret defined in (\ref{eqn:regret}), under the  long term constraints, that is, $\sum_{t=1}^T \x_t/T \in \K$. We refer to this problem as {\bf online learning with long term constraints}. In other words, instead of solving the projection problem in (\ref{eqn:proj}) on each round, we allow the learner to make decisions at some iterations which do not belong to the set $\K$, but the overall sequence of chosen decisions  must obey the constraints at the end by a vanishing convergence rate.

From a different perspective, the proposed online optimization with long term constraints  setup is reminiscent of regret minimization with side constraints or constrained regret minimization addressed in \citet{DBLP:conf/colt/MannorT06}, motivated by applications in wireless communication. In regret minimization with side constraints, beyond minimizing regret, the learner has some side constraints that need to be satisfied on average for all rounds. Unlike our setting, in learning with side constraints, the set $\K$ is controlled by adversary and can vary arbitrarily from trial to trial. It has been shown that if the convex set is affected by both decisions and loss functions, the minimax optimal regret is generally unattainable online \citep{DBLP:journals/jmlr/MannorTY09}. 

One interesting application of  the constrained regret minimization is multi-objective online classification   where the learner aims at simultaneously optimizing more than one classification performance criteria. In the simple two objective online classification considered in \citet{DBLP:conf/nips/BernsteinMS10}, the goal of the online classifier is to maximize the average true positive classification rate with an additional performance guarantee in terms of the false positive rate. Following the Neyman-Pearson risk,  the intuitive approach to tackle this problem is to optimize one criterion (i.e., maximizing the true positive rate) subject to explicit constraint on the other objective (i.e., false positive rate) that needs to be satisfied on average for the sequence of  decisions. The constrained regret matching (CRM) algorithm, proposed in \citet{DBLP:conf/nips/BernsteinMS10},  efficiently solves this problem by relaxing the objective under mild assumptions on the single-stage constraint.  The main idea of the CRM algorithm is  to incorporate the penalty, that should be paid by the learner to satisfy the constraint,  in the objective (i.e., true positive rate) by subtracting a positive constant at each decision step.  It has been shown that the CRM algorithm \textit{asymptotically} satisfies the average constraint (i.e., false positive rate) provided that the relaxation constant is above a certain threshold.

Finally, it is worth mentioning that the proposed setting  can be used in certain classes of online learning such as  online-to-batch conversion \citep{DBLP:journals/tit/Cesa-BianchiCG04}, where it is sufficient to guarantee that  constraints are  satisfied in the long run. More specifically, under the assumption that received  examples are i.i.d samples, the solution for batch learning is to average the solutions obtained over all the trials. As a result, if the long term constraint is satisfied, it  is guaranteed that the average solution will belong to the domain $\K$.

In this paper, we describe and analyze a general framework for solving online convex optimization with long term constraints.  We first show that a direct application of OGD fails to achieve a sub-linear bound on the violation of constraints  and an $\O(\sqrt{T})$ bound on the regret. Then, by turning the problem into an online convex-concave optimization problem, we propose an efficient algorithm which is an adaption of OGD for online learning with long term constraints. The proposed algorithm achieves the same $\O(\sqrt{T})$ regret bound as the general setting  and $\O(T^{3/4})$ bound for the violation of constraints. We show that by using a simple trick we can turn the  proposed method into an algorithm which exactly satisfies the constraints in the long run by achieving $\O(T^{3/4})$ regret bound.  When the convex domain $\K$ can be described by a finite number of linear constraints, we propose an alternative algorithm based on the mirror prox method \citep{nemirovski-2005-prox}, which achieves $\O(T^{2/3})$ bound for both regret and the violation of constraints. Our  framework also handles the cases when we do not have  full access to the domain $\K$ except through a limited number of oracle evaluations.  In the full-information version, the decision maker can observe the entire convex domain $\K$, whereas in a partial-information (a.k.a bandit setting) the
decision maker may only observe the cost of the constraints defining the domain $\K$ at limited points. We show that we can generalize  the proposed OGD based algorithm to this setting by only accessing the value oracle for domain $\K$ at two points, which achieves  the same bounds  in expectation  as the case that has a full knowledge about the domain $\K$. In summary, the present work makes the following contributions:
\begin{itemize}
\item  A general theorem that shows, in online setting, a simple penalty based method attains linear bound $O(T)$ for either the regret or the long term violation of the constraints and fails to achieve sub-linear bound for both regret and the long term violation of the constraints at the same time.
\item A convex-concave formulation of online convex optimization with long term constraints, and an efficient algorithm based on OGD that attains a regret bound of  $O(T^{1/2})$, and $O(T^{3/4})$ violation of the constraints.
\item A modified OGD based algorithm for online convex optimization with long term constraints that has no constraint violation but $O(T^{3/4})$ regret bound.
\item An algorithm for online convex optimization with long term constraints based on the mirror prox method that achieves $O(T^{2/3})$ regret  and constraint violation.
\item A multipoint bandit version of the basic algorithm with $O(T^{1/2})$ regret bound and $O(T^{3/4})$ violation of the constraints in \textit{expectation} by accessing the value oracle for the convex set $\K$ at two points.
\end{itemize}
The remainder of the paper is structured as follows: In Section \ref{sec:ogd}, we first examine a simple penalty based strategy and show that it fails to attain sub-linear bound for both regret and long term violation of the constraints. Then,  we formulate regret minimization as an online convex-concave optimization problem and apply the OGD algorithm to solve it.  Our first algorithm allows the constraints to be violated in a controlled way. It is then modified to have the constraints exactly satisfied in the long run. Section \ref{sec:mirror} presents our second algorithm which is an adaptation of the mirror prox method. Section \ref{sec:bandit} generalizes the online convex optimization with long term constraints problem to the setting where we only have a partial access to the convex domain $\K$. Section \ref{sec:con} concludes the work with a list of open questions.
\section {Notation and Setting} 
Before proceeding, we define the notations used throughout the paper and state the assumptions made for the analysis of algorithms. Vectors are shown by lower case
bold letters, such as $\x \in \R^d$. Matrices are indicated by upper case letters such as $A$ and their pseudoinverse is represented by $A^{\dagger}$. We use $[m]$ as a shorthand for the set of integers $\{1, 2, \ldots, m \}$. Throughout the paper we denote by $\|\cdot\|$ and $\|\cdot\|_1$ the $\ell_2$ (Euclidean) norm and $\ell_1$-norm, respectively. We use $\E$ and $\E_t$ to denote the expectation and conditional expectation with respect to all randomness in early $t-1$ trials, respectively. To facilitate our analysis, we assume that the domain $\K$ can be written as an intersection of a finite number of  convex constraints, that is,
\[ \K = \{\x \in \R^d: g_i(\x) \leq 0, i \in [m]\}, \]
where $g_i(\cdot), i \in [m]$,  are Lipschitz continuous functions. Like many other  works for online convex optimization such as \citet{DBLP:conf/soda/FlaxmanKM05}, we assume that $\K$  is a bounded domain, that is, there exist  constants $R > 0$ and $r < 1$ such that $\K \subseteq R \mathbb{B}$  and $r \mathbb{B} \subseteq \K$ where $\mathbb{B}$ denotes the unit $\ell_2$ ball centered at the origin.  For the ease of notation, we use $\B = R \mathbb{B}$.

We focus on the problem of online convex optimization, in which the goal is to achieve a low regret with respect to a fixed decision on a sequence of loss functions. The difference between the setting considered here and the general online convex optimization is that, in our setting, instead of requiring $\x_t \in \K$, or equivalently $g_i(\x_t) \leq 0, i \in [m]$, for all $t \in [T]$, we only require the constraints to be satisfied in the long run, namely $\sum_{t=1}^T g_i(\x_t) \leq 0, i \in [m]$.
Then, the problem becomes to find a sequence of solutions $\x_t, t \in [T]$ that minimizes the regret defined in (\ref{eqn:regret}), under the  long term constraints $\sum_{t=1}^T g_i(\x_t) \leq 0, i \in [m]$. Formally, we would like to solve the following optimization problem online,
\begin{eqnarray}
\label{eqn:setting1}
\min_{\x_1, \ldots, \x_T \in \B} \sum_{t=1}^T f_t(\x_t) - \min\limits_{\x \in \K} \sum_{t=1}^T f_t(\x) \;\; \text{s.t.} \;\;\sum_{t = 1}^{T}{g_i(\x_t)} \leq 0\; \text{, } i \in [m].
\end{eqnarray}

For simplicity, we will focus on a finite-horizon setting where the number of rounds $T$ is known in advance. This condition can be relaxed under certain conditions, using standard techniques (see, e.g., \citealp{Cesa-Bianchi:2006:PLG:1137817}). Note that in (\ref{eqn:setting1}), (i) the solutions come from the ball $\B \supseteq \K$ instead of $\K$  and (ii) the constraint functions are fixed and are given in advance.

Like most online learning algorithms, we assume that both loss functions and the constraint functions are Lipschitz continuous, that is, there exists constants $L_f$ and $L_g$ such that
$|f_t(\x) - f_t(\x')| \leq L_f \|\x - \x'\|,\;\;\; |g_i(\x) - g_i(\x')| \leq L_g \|\x - \x'\| \; \; \text{for any} \; \x \in \B\; \text{and}  \; \x' \in \B\;, i \in [m]. $
For  simplicity of analysis, we use $G = \max \{L_f, L_g \}$ and 
\begin{eqnarray*}
F & = & \max\limits_{t \in [T]} \max\limits_{\x, \x' \in \K} f_t(\x) - f_t(\x') \leq 2 L_f R,   \\
D & = & \max\limits_{i \in [m]} \max\limits_{\x \in \B} g_i(\x) \leq L_g R.
\end{eqnarray*}

Finally, we define the notion of a Bregman divergence. Let $\phi(\cdot)$ be a strictly convex function defined on a convex set $\K$. The Bregman divergence between $\x$ and $\x'$ is defined as $\BD_{\phi}(\x, \x') = \phi(\x) - \phi(\x') -  (\x - \x')^{\top} \nabla \phi(\x')$ which measures how much the function $\phi(\cdot)$ deviates at $\x$ from it's linear approximation at $\x'$.

\section{Online Convex Optimization with Long Term Constraints}
\label{sec:ogd}
In this section we present and analyze our gradient descent based algorithms for online convex optimization problem with long term constraints. We first describe an algorithm which is allowed to violate the constraints and then, by applying a simple trick, we propose a variant of the first  algorithm which exactly satisfies the constraints in the long run. 

Before we state our formulation and algorithms, let us review a few alternative techniques that do not need explicit projection. A straightforward approach is  to introduce an appropriate self-concordant barrier function for the given convex set $\K$ and add it to the objective function such that the barrier diverges at the boundary of the set. Then we can  interpret the resulting optimization problem, on the modified objective functions, as an unconstrained minimization problem that can  be solved without projection steps. Following the analysis in \citet{DBLP:journals/tit/AbernethyHR12}, with an appropriately designed procedure for  updating solutions, we could guarantee a regret bound of $O(\sqrt{T})$ without the violation of constraints. A similar idea is used in \citet{DBLP:conf/colt/AbernethyHR08} for online bandit learning and in \citet{DBLP:conf/nips/NarayananR10} for a random walk approach for regret minimization which, in fact,  translates the issue of projection into the difficulty of sampling. Even for linear Lipschitz cost functions, the random walk approach requires sampling from a Gaussian distribution with covariance given by the Hessian of the self-concordant barrier of the convex set $\K$ that has the same  time complexity as inverting a  matrix. The main limitation with these approaches is that they require computing the Hessian matrix of the objective function in order to guarantee that the updated solution stays within the given domain $\K$. This limitation makes it computationally unattractive when dealing with high dimensional data. In addition, except for  well known cases, it is often unclear  how to efficiently construct a self-concordant barrier function for a general convex domain.

An alternative approach for online convex optimization with long term constraints is to introduce a penalty term in the loss function that penalizes the violation of constraints. More specifically, we can define a new loss function $\fh_t(\cdot)$ as
\begin{eqnarray}
\label{eqn:fhat}
    \fh_t(\x) = f_t(\x) + \delta\sum_{i=1}^{m} [g_i(\x)]_+,
\end{eqnarray}
where $[z]_+ = \max(0, 1 - z)$  and $\delta > 0$ is a fixed positive constant used to penalize the violation of constraints. We then run the standard OGD algorithm to minimize the modified loss function $\fh_t(\cdot)$. The following theorem shows that this simple strategy fails to achieve sub-linear bound for both  regret and the long term violation of constraints at the same time.

\begin{theorem} \label{thm:a}
Given $\delta > 0$, there always exists a sequence of loss functions $\{f_t(\x)\}_{t=1}^T$ and a constraint function $g(\x)$ such that either $\sum_{t=1}^T f_t(\x_t) - \min_{g(\x) \leq 0} \sum_{t=1}^T f_t(\x) = O(T)$ or $\sum_{t=1}^T [g(\x_t)]_+ = O(T)$ holds, where $\{\x_t\}_{t=1}^T$ is the sequence of solutions generated by the OGD algorithm that minimizes the modified loss functions given in (\ref{eqn:fhat}).
\end{theorem}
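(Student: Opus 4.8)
The plan is to exhibit, for each fixed $\delta > 0$, an explicit one-dimensional instance on which the OGD run on $\fh_t$ is forced into a dichotomy. First I would fix the constraint to be a single linear constraint, say $g(x) = x - 1 \le 0$ on the interval $\B = [-R, R]$, so that $\K = [-R, 1]$ (after intersecting with $\B$), and note that $[g(x)]_+ = \max(0, 1 - (x-1)) = \max(0, 2 - x)$ with the paper's convention $[z]_+ = \max(0, 1-z)$. The key observation is that the penalized loss $\fh_t(x) = f_t(x) + \delta [g(x)]_+$ is just ordinary OGD on a modified sequence, so its iterates $x_t$ are controlled by the standard OGD guarantee: for any comparator $u \in \B$, $\sum_t \fh_t(x_t) - \sum_t \fh_t(u) = O(\sqrt{T})$ with the step size tuned to $T$. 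The whole argument is then a matter of choosing the $f_t$'s so that this single inequality cannot be reconciled with \emph{both} sublinear regret against $\K$ \emph{and} sublinear constraint violation.

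The core of the construction is a case split on how large $\delta$ is relative to the Lipschitz scale of the losses. Take $f_t(x) = \sigma x$ for all $t$, with $\sigma > 0$ a constant to be chosen (its sign makes the unconstrained minimizer want to push $x$ in the direction that violates $g$). If $\delta$ is \emph{small} — concretely $\delta < \sigma$ — then on the region $x < 2$ the penalized per-round loss is $\sigma x + \delta(2 - x) = (\sigma - \delta) x + 2\delta$, which is still strictly increasing... wait, I want it decreasing so the iterates are driven to large $x$: so instead take $f_t(x) = -\sigma x$, giving $\fh_t(x) = -\sigma x + \delta(2-x) = -(\sigma+\delta)x + 2\delta$ for $x<2$ and $-\sigma x$ for $x \ge 2$; in both regimes $\fh_t$ is decreasing in $x$, so OGD drives $x_t \to R$, hence $g(x_t) = R - 1 = \Theta(1)$ for all but $O(\sqrt{T})$ rounds, giving $\sum_t [g(x_t)]_+ $ — here I must be careful that $[g(x_t)]_+ = \max(0, 1 - (R-1))$, which is $0$ once $R \ge 2$, so I should instead track $\sum_t \max(0, g(x_t))$ or restate the violation as $\sum_t g(x_t)$ to match the theorem's second alternative; the theorem writes $\sum_t [g(\x_t)]_+ = O(T)$, and with the stated $[\cdot]_+$ convention one shows this sum is $\Theta(T)$ by keeping $x_t$ bounded away from $2$ on the other side. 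If $\delta$ is \emph{large} — $\delta$ bigger than the loss scale — then the penalty dominates, OGD keeps $x_t$ close to the unpenalized-feasible region, constraint violation is controlled, but now against the comparator $u^\star = \arg\min_{x \in \K}\sum f_t(x)$ (the boundary point $x = 1$) the regret picks up a term proportional to $\delta$ times the slack OGD leaves, which I would arrange to be $\Theta(T)$ by making the losses pull hard toward $u^\star$ while the penalty resists — more precisely, one compares $\sum_t f_t(x_t)$ with $\sum_t f_t(u^\star)$ and uses the OGD inequality on $\fh$ to show $\sum_t f_t(x_t) - \sum_t f_t(u^\star) \ge \delta \sum_t ([g(u^\star)]_+ - [g(x_t)]_+) - O(\sqrt{T})$, and since $[g(u^\star)]_+$ is a positive constant at the boundary while the $x_t$ are pushed where $[g(x_t)]_+$ is smaller, the right side is $\Omega(T)$.

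The step I expect to be the main obstacle is pinning down the bookkeeping around the convention $[z]_+ = \max(0, 1-z)$: this shifted "hinge" means the penalty is active not on $\{g > 0\}$ but on $\{g < 1\}$, so the naive intuition "penalty $\Leftrightarrow$ infeasibility" is slightly off, and I would need to choose the geometry ($R$, the location of $g = 0$, the slope $\sigma$) so that the feasible optimum $u^\star$ sits in the region where $[g(u^\star)]_+ > 0$ and the penalized-OGD iterates sit where $[g(x_t)]_+$ is strictly smaller, making the two alternatives genuinely exhaustive for every $\delta$. Once the instance is fixed, everything else is the one-line OGD regret bound plus elementary estimates; the substance is entirely in the choice of instance and the clean statement of the dichotomy. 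I would organize the final write-up as: (1) recall the OGD guarantee applied to $\fh_t$; (2) define the instance; (3) lower-bound the constraint-violation sum when $\delta$ is below threshold; (4) lower-bound the regret when $\delta$ is above threshold; (5) conclude that one of the two is $\Omega(T)$, hence $O(T)$ is the best simultaneous bound.
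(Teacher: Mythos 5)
Your small-$\delta$ half is essentially the paper's argument: when the penalty weight is below the scale of the loss gradients, the gradient of $\fh_t$ still points into the infeasible region, so OGD started at an infeasible point stays infeasible forever and the violation is linear (the paper takes $f_t(\x)=\w^{\top}\x$, $g(\x)=1-\w^{\top}\x$, and observes the update is $\x_{t+1}=\x_t-\eta(1-\delta)\w$ for $\delta<1$; your one-dimensional version is the same mechanism). The gap is in the large-$\delta$ half. Your plan there is to lower-bound the true regret by $\delta\sum_t([g(u^\star)]_+-[g(\x_t)]_+)-O(\sqrt{T})$ and to arrange the geometry so that $[g(u^\star)]_+$ is a positive constant at the feasible optimum. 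That is only possible under the shifted reading $[z]_+=\max(0,1-z)$, which is a typo in the paper: the theorem's violation measure $\sum_t[g(\x_t)]_+$ and the rest of the appendix (e.g.\ the use of $[g_i(\x_*)]_+^2=0$ for feasible $\x_*$) only make sense with $[z]_+=\max(0,z)$, under which a feasible comparator carries zero penalty and your lower bound degenerates to $-\delta\sum_t[g(\x_t)]_+-O(\sqrt{T})$, which is nonpositive and proves nothing. Worse, the instance itself cannot witness failure for large $\delta$: for $f_t(x)=-\sigma x$, $g(x)=x-1$ and $\delta>\sigma$, the penalized objective $-\sigma x+\delta\max(0,x-1)$ is minimized exactly at the constrained optimum $x=1$ (the exact-penalty phenomenon), and OGD on it attains $O(\sqrt{T})$ regret and $O(\sqrt{T})$ violation simultaneously, so no dichotomy can be extracted from one-dimensional linear losses in this regime.

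The paper closes the large-$\delta$ case by an entirely different, non-dynamic argument that is absent from your plan: it fixes a strongly convex $f$ (the same every round) for which the unconstrained minimizer $\x_a$ of $f+\delta[g]_+$ satisfies $f(\x_a)+\delta[g(\x_a)]_+\geq \min_{g\leq 0}f+\Delta$ with $\Delta>0$, and then uses the trivial inequality $\sum_t\fh(\x_t)\geq T\min_{\x}\fh(\x)$ --- valid for any sequence of iterates, not just OGD --- to conclude that regret plus $\delta$ times violation is at least $T\Delta$, so one of the two must be linear. Whatever one thinks of the paper's own justification that $\Delta>0$ (it rests on $\nabla f(\x_*)\neq 0$ forcing $\x_a\neq\x_*$, which is delicate in view of exact-penalty theory), that lower-bound-via-the-penalized-minimum step is the idea your proposal is missing, and without it, or some substitute for it, the large-$\delta$ case does not go through.
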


We defer the proof to Appendix \ref{app:thm:a} along with a simple analysis of the OGD when applied to the modified functions in (\ref{eqn:fhat}). The analysis shows that in order to obtain $O(\sqrt{T})$ regret bound, linear bound on the long term violation of the constraints is unavoidable. The main reason for the failure of using  modified loss function in (\ref{eqn:fhat}) is that the weight constant $\delta$ is fixed and independent from the sequence of solutions obtained so far. In the next subsection, we present an online 
convex-concave formulation for online convex optimization with long term constraints, which explicitly addresses the limitation of (\ref{eqn:fhat}) by automatically adjusting the weight constant based on the violation of the solutions obtained so far.

As mentioned before, our general strategy is to turn online convex optimization with long term constraints into a convex-concave optimization problem. Instead of generating a sequence of solutions that satisfies the long term constraints, we first consider an online optimization strategy that allows  the violation of constraints on some rounds in a controlled way. We then modify the online optimization strategy to obtain a sequence of solutions that obeys the long term constraints. Although the online convex optimization with long term constraints is clearly easier than the standard online convex optimization problem, it is straightforward to see that optimal regret bound for online optimization with long term constraints should be on the order of $\O(\sqrt{T})$, no better than the standard online convex optimization problem.

\subsection {An Efficient Algorithm with $\O(\sqrt{T})$ Regret Bound and $\O(T^{3/4})$ Bound on the Violation of Constraints}
The intuition behind  our approach stems from the observation that the constrained optimization problem $\min_{\x \in \K} \sum_{t=1}^T f_t(\x)$ is equivalent to the following convex-concave optimization problem
\begin{eqnarray}
\label{eqn:saddle}
    \min\limits_{\x \in \B} \max\limits_{\lambda \in \R_+^m} \sum_{t=1}^T f_t(\x) + \sum_{i=1}^m \lambda_i g_i(\x), \label{eqn:convex-concave}
\end{eqnarray}
where  $\blambda = (\lambda_1, \ldots, \lambda_m)^{\top}$ is the vector of Lagrangian multipliers  associated with the constraints $g_i(\cdot), i = 1, \ldots, m$ and belongs to the nonnegative orthant $\R_+^m$.  To solve the online convex-concave optimization problem, we extend the gradient based approach for variational inequality  \citep{nemirovski-efficient} to (\ref{eqn:convex-concave}). To this end, we consider the following regularized convex-concave function  as
\begin{eqnarray}
\label{eqn:cox-con-func}
    \L_t(\x, \blambda) = f_t(\x) + \sum_{i=1}^m \left\{\lambda_i g_i(\x) - \frac{\delta \eta}{2} \lambda_i^2\right\}, \label{eqn:L}
\end{eqnarray}
where $\delta > 0$ is a constant whose value will be decided by the analysis. Note that in (\ref{eqn:L}), we introduce a regularizer $\delta \eta \lambda_i^2/2$ to prevent $\lambda_i$ from being too large. This is because, when $\lambda_i$ is large, we may encounter a large gradient for $\x$ because of $\nabla_{\x}\L_t(\x, \blambda) \propto \sum_{i=1}^m \lambda_i \nabla g_i(\x)$, leading to unstable solutions and a poor regret bound. Although we can achieve the same goal by restricting $\lambda_i$ to a bounded domain, using the quadratic regularizer makes it convenient for our analysis.

Algorithm~\ref{alg:1} shows the detailed steps of the proposed algorithm. Unlike standard online convex optimization algorithms that only update $\x$, Algorithm~\ref{alg:1} updates both $\x$ and $\blambda$. In addition, unlike the modified loss function in (\ref{eqn:fhat}) where the weights for constraints $\{g_i(\x) \leq 0\}_{i=1}^m$ are fixed, Algorithm~\ref{alg:1} automatically adjusts the weights $\{\lambda_i\}_{i=1}^m$ based on $\{g_i(\x)\}_{i=1}^m$, the violation of constraints, as the game proceeds. It is this property that allows Algorithm~\ref{alg:1} to achieve sub-linear bound for both  regret and  the violation of  constraints. 

To analyze Algorithm \ref{alg:1}, we first state the following lemma, the key to the main theorem on the regret bound and the violation of constraints.

\begin{lemma}
\label{lemma:basic-ineq}
Let $\L_t(\cdot, \cdot)$ be the function defined in (\ref{eqn:cox-con-func}) which is convex in its first argument and concave in its second argument. Then for any $(\x, \blambda)  \in \B \times  \R_+^m$ we have
\begin{align*}
  \L_t(\x_t, \blambda) - \L_t(\x, \blambda_t) & \leq  \frac{1}{2\eta} (\|\x - \x_t\|^2 + \|\blambda - \blambda_t\|^2 - \|\x - \x_{t+1}\|^2 - \|\blambda - \blambda_{t+1}\|^2) \\ & \hspace{5mm}+ \frac{\eta}{2} ( \|\nabla_{\x}\L_t(\x_t, \blambda_t)\|^2 + \|\nabla_{\blambda}\L_t(\x_t, \blambda_t)\|^2 ).
\end{align*}
\end{lemma}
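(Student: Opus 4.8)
The plan is to treat this as the standard gradient descent--ascent ``basic inequality'', applied simultaneously to the primal variable $\x$ and the dual variable $\blambda$, exploiting that $\L_t(\cdot,\blambda_t)$ is convex and $\L_t(\x_t,\cdot)$ is concave. Recall that the updates of Algorithm~\ref{alg:1} are a projected gradient step $\x_{t+1} = \Pi_{\B}\bigl(\x_t - \eta\nabla_{\x}\L_t(\x_t,\blambda_t)\bigr)$ and a projected gradient \emph{ascent} step $\blambda_{t+1} = \Pi_{\R_+^m}\bigl(\blambda_t + \eta\nabla_{\blambda}\L_t(\x_t,\blambda_t)\bigr)$, where $\Pi_C$ is Euclidean projection onto $C$. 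For the primal part, fix any $\x \in \B$; since projection onto a convex set is non-expansive and $\Pi_{\B}(\x)=\x$, we get $\|\x_{t+1}-\x\|^2 \le \|\x_t - \eta\nabla_{\x}\L_t(\x_t,\blambda_t) - \x\|^2$, and expanding the square and rearranging yields $\langle \nabla_{\x}\L_t(\x_t,\blambda_t),\,\x_t-\x\rangle \le \tfrac{1}{2\eta}\bigl(\|\x_t-\x\|^2-\|\x_{t+1}-\x\|^2\bigr) + \tfrac{\eta}{2}\|\nabla_{\x}\L_t(\x_t,\blambda_t)\|^2$. Convexity of $\L_t(\cdot,\blambda_t)$ then gives $\L_t(\x_t,\blambda_t)-\L_t(\x,\blambda_t) \le \langle\nabla_{\x}\L_t(\x_t,\blambda_t),\,\x_t-\x\rangle$, bounding the primal gap by a telescoping term plus the primal gradient-norm term.

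Next I would run the symmetric argument on the dual. Fix any $\blambda \in \R_+^m$; non-expansiveness of $\Pi_{\R_+^m}$ together with $\Pi_{\R_+^m}(\blambda)=\blambda$ gives $\|\blambda_{t+1}-\blambda\|^2 \le \|\blambda_t + \eta\nabla_{\blambda}\L_t(\x_t,\blambda_t) - \blambda\|^2$; expanding and rearranging (noting the sign flip from the ascent direction) yields $\langle\nabla_{\blambda}\L_t(\x_t,\blambda_t),\,\blambda-\blambda_t\rangle \le \tfrac{1}{2\eta}\bigl(\|\blambda_t-\blambda\|^2-\|\blambda_{t+1}-\blambda\|^2\bigr) + \tfrac{\eta}{2}\|\nabla_{\blambda}\L_t(\x_t,\blambda_t)\|^2$, and concavity of $\L_t(\x_t,\cdot)$ gives $\L_t(\x_t,\blambda)-\L_t(\x_t,\blambda_t) \le \langle\nabla_{\blambda}\L_t(\x_t,\blambda_t),\,\blambda-\blambda_t\rangle$. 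Adding the primal and dual inequalities, the common term $\L_t(\x_t,\blambda_t)$ cancels on the left-hand side, leaving exactly $\L_t(\x_t,\blambda)-\L_t(\x,\blambda_t)$, while the right-hand side is precisely the claimed bound.

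The computation is routine, so there is no real ``hard part''; the only points needing care are the sign conventions (descent in $\x$, ascent in $\blambda$) and the observation that the inequality is stated for $(\x,\blambda)\in\B\times\R_+^m$, which is exactly what makes $\Pi_{\B}(\x)=\x$ and $\Pi_{\R_+^m}(\blambda)=\blambda$ valid so that non-expansiveness can be invoked. I would also note in passing that only non-expansiveness of the projections is used (not the sharper Pythagorean inequality), and that the dual projection is just coordinatewise truncation at zero, hence cheap.
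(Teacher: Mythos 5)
Your proof is correct and follows essentially the same route as the paper's: convexity/concavity to linearize the primal--dual gap at $(\x_t,\blambda_t)$, then non-expansiveness of the two projections (valid because $\x\in\B$ and $\blambda\in\R_+^m$ are fixed points of the respective projections) to expand $\|\x-\x_{t+1}\|^2$ and $\|\blambda-\blambda_{t+1}\|^2$ and read off the telescoping and gradient-norm terms. Your sign bookkeeping on the dual ascent step is in fact slightly cleaner than the paper's intermediate inequality (\ref{eqn:convconc}), which carries a stray minus sign that is implicitly corrected when the squared distances are expanded.
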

\begin{proof}Following the analysis of \citet{DBLP:conf/icml/Zinkevich03}, convexity of $\L_t(\cdot, \blambda)$ implies that
\begin{eqnarray}
\L_t(\x_t, \blambda_t) - \L_t(\x, \blambda_t) \leq (\x_{t} - \x)^{\top}\nabla_\x \L_t(\x_t, \blambda_t)
\label{eqn:conv}
\end{eqnarray}
and by concavity of $\L_t( \x, \cdot)$ we have
\begin{eqnarray}
\L_t(\x_t, \blambda) - \L_t(\x_t, \blambda_t) \leq (\blambda - \blambda_{t})^{\top}\nabla_{\blambda} \L_t(\x_t, \blambda_t).
\label{eqn:conc}
\end{eqnarray}
Combining the inequalities (\ref{eqn:conv}) and (\ref{eqn:conc}) results in
\begin{eqnarray}
\L_t(\x_t, \blambda) - \L_t(\x, \blambda_t) \leq (\x_{t} - \x)^{\top}\nabla_\x \L_t(\x_t, \blambda_t) - (\blambda - \blambda_{t})^{\top}\nabla_{\blambda} \L_t(\x_t, \blambda_t).
\label{eqn:convconc}
\end{eqnarray}
Using the update rule for $\x_{t+1}$ in terms of $\x_t$ and expanding, we get
\begin{eqnarray}
\label{eqn:x-inq}
\|\x-\x_{t+1} \|^2 \leq \|\x-\x_t\|^2 - 2 \eta (\x_{t} - \x)^{\top}\nabla_\x \L_t(\x_t, \blambda_t) + \eta^2 \| \nabla_\x \L_t(\x_t, \blambda_t)\|^2,
\end{eqnarray}
where the first inequality follows from the nonexpansive property of the projection operation.
Expanding the inequality for $\|\blambda - \blambda_{t+1} \|^2$ in terms of $\blambda_t$ and plugging back into the (\ref{eqn:convconc}) with (\ref{eqn:x-inq}) establishes the desired inequality.
\end{proof}
\begin{algorithm}[tb]
\caption{Gradient based  Online Convex Optimization with Long Term Constraints}
\begin{algorithmic}[1] \label{alg:1}
    \STATE {\bf Input}: constraints $g_i(\x) \leq 0, i \in [m]$, \; step size $\eta$, and constant $\delta > 0$
    \STATE {\bf Initialization}: $\x_1 = \mathbf{0}$ and $\blambda_1 = \mathbf{0}$
    \FOR{$t = 1, 2, \ldots, T$}
        \STATE Submit solution $\x_t$
        \STATE Receive the convex function $f_t(\x)$ and experience loss $f_t(\x_t)$
        \STATE Compute $\nabla_\x \L_t(\x_t, \blambda_t)=\nabla f_t(\x_t) + \sum_{i=1}^m \lambda_t^i \nabla g_i(\x_t)$ and $\nabla_{\lambda_i}\L_t(\x_t, \blambda_t) =  g_i(\x_t) - \eta \delta \lambda_t^i$
        \STATE Update $\x_t$ and $\blambda_t$ by
        \begin{eqnarray*}
        && \x_{t+1} = \Pi_{\B}\left(\x_t - \eta \nabla_\x \L_t(\x_t, \blambda_t)\right) \\  
        && \blambda_{t+1} = \Pi_{[0, +\infty)^m}(\blambda_t + \eta \nabla_{\blambda} \L_t(\x_t, \blambda_t))
        \end{eqnarray*}
    \ENDFOR
\end{algorithmic}
\end{algorithm}
\begin{proposition}
\label{prop:2}
 Let $\x_t$ and $\blambda_t, t \in [T]$ be the sequence of solutions obtained by Algorithm~\ref{alg:1}. Then for any $\x \in \B$ and $\blambda  \in \R_+^m$, we have
\begin{align}
\label{eqn:center}
& \lefteqn{\sum_{t=1}^T \L_t(\x_t, \blambda) - \L_t(\x, \blambda_t)}  \\
 & \leq \frac{R^2 + \|\blambda\|^2}{2\eta} + \frac{\eta T}{2 }\left((m+1)G^2 + 2mD^2\right) + \frac{\eta }{2} \left((m +1)G^2 + 2m \delta^2\eta^2\right)\sum_{t=1}^T \|\blambda_t\|^2. \nonumber
\end{align}
\end{proposition}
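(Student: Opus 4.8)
The plan is to sum the per-round inequality of Lemma~\ref{lemma:basic-ineq} over $t=1,\dots,T$ and then control the two resulting groups of terms separately: the telescoping squared-distance terms, and the accumulated squared gradient norms.

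First I would telescope the distance terms. The sum $\frac{1}{2\eta}\sum_{t=1}^T\big(\|\x-\x_t\|^2-\|\x-\x_{t+1}\|^2\big)$ collapses to $\frac{1}{2\eta}\big(\|\x-\x_1\|^2-\|\x-\x_{T+1}\|^2\big)\le \frac{1}{2\eta}\|\x-\x_1\|^2$, and likewise for the $\blambda$ terms. Using the initialization $\x_1=\mathbf{0}$, $\blambda_1=\mathbf{0}$ together with $\x\in\B=R\mathbb{B}$, this contributes at most $\frac{R^2+\|\blambda\|^2}{2\eta}$, which is exactly the first term on the right-hand side of (\ref{eqn:center}).

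Next I would bound the squared gradient norms using the explicit expressions from Algorithm~\ref{alg:1} and Lipschitz continuity. Since $\nabla_\x\L_t(\x_t,\blambda_t)=\nabla f_t(\x_t)+\sum_{i=1}^m\lambda_t^i\nabla g_i(\x_t)$ is a sum of $m+1$ vectors whose norms are at most $G,\ G\lambda_t^1,\dots,G\lambda_t^m$, the elementary inequality $\|\sum_{j=1}^{m+1}\v_j\|^2\le(m+1)\sum_j\|\v_j\|^2$ gives $\|\nabla_\x\L_t(\x_t,\blambda_t)\|^2\le (m+1)G^2\big(1+\|\blambda_t\|^2\big)$. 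For the multiplier block, $\nabla_{\lambda_i}\L_t(\x_t,\blambda_t)=g_i(\x_t)-\delta\eta\lambda_t^i$, so $(a-b)^2\le 2a^2+2b^2$ together with $|g_i(\x_t)|\le D$ yield $\|\nabla_\blambda\L_t(\x_t,\blambda_t)\|^2\le 2mD^2+2\delta^2\eta^2\|\blambda_t\|^2$. Summing $\tfrac{\eta}{2}$ times the sum of these two bounds over $t$, and separating the terms independent of $\blambda_t$ from those proportional to $\|\blambda_t\|^2$, produces the second term $\frac{\eta T}{2}\big((m+1)G^2+2mD^2\big)$ and the last term $\frac{\eta}{2}\big((m+1)G^2+2m\delta^2\eta^2\big)\sum_{t}\|\blambda_t\|^2$ of (\ref{eqn:center}) (using $2\delta^2\eta^2\le 2m\delta^2\eta^2$ to match the stated form).

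The argument is essentially bookkeeping, so I do not expect a genuine obstacle; the only place needing care is the bound on $\nabla_\x\L_t$, where the dependence on $\|\blambda_t\|^2$ must be retained explicitly rather than absorbed by a crude bound on $\|\blambda_t\|$ — it is precisely this term that, in the subsequent theorem, is reabsorbed (after choosing $\delta$ large enough, of order $G^2$) against the regularizer $-\tfrac{\delta\eta}{2}\lambda_i^2$ built into $\L_t$. A secondary technical point is justifying $|g_i(\x_t)|\le D$ for $\x_t\in\B$: since each $g_i$ is $L_g$-Lipschitz on the bounded set $\B$ and $\mathbf{0}\in\K$, $g_i$ is bounded on $\B$, so $D$ may be taken to bound $|g_i|$ there.
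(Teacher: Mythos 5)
Your proposal is correct and follows essentially the same route as the paper: sum the per-round inequality of Lemma~\ref{lemma:basic-ineq}, telescope the distance terms using the initialization $\x_1=\mathbf{0}$, $\blambda_1=\mathbf{0}$ and $\|\x\|\le R$, and bound the squared gradient norms by $(m+1)G^2(1+\|\blambda_t\|^2)$ and $2m(D^2+\delta^2\eta^2\|\blambda_t\|^2)$ exactly as in the paper's proof. Your remark that $D$ must bound $|g_i|$ (not just $g_i$) on $\B$ is a fair minor point, but it does not change the argument.
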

\begin{proof}
We first bound the gradient terms in the right hand side of  Lemma \ref{lemma:basic-ineq}.  Using the inequality $(a_1+a_2+ \ldots, a_n)^2 \leq n (a_1^2+a_2^2+\ldots+ a_n^2)$, we have $\|\nabla_{\x} \L_t(\x_t, \blambda_t)\|^2 \leq (m + 1) G^2 \left(1 + \|\blambda_t\|^2 \right)$ and $ \|\nabla_{\blambda}\L_t(\x_t, \blambda_t)\|^2 \leq 2m (D^2 + \delta^2\eta^2 \|\blambda_t\|^2)$. In Lemma \ref{lemma:basic-ineq}, by adding the inequalities of all iterations, and using the fact $\|\x\| \leq R$ we complete the proof.
\end{proof}
The following theorem bounds the regret and the violation of the constraints in the long run for Algorithm \ref{alg:1}.
\begin{theorem}
\label{thm:ogd}
 Define $a = R\sqrt{(m+1)G^2 + 2mD^2}$. Set $\eta = R^2/[a\sqrt{T}]$. Assume $T$ is large enough such that $2\sqrt{2}\eta (m+1) \leq 1$. Choose $\delta$ such that $\delta \geq (m +1)G^2 + 2m \delta^2\eta^2$. Let $\x_t, t \in [T]$ be the sequence of solutions obtained by Algorithm~\ref{alg:1}. Then for the optimal solution $\x_* = \min_{\x \in \K} \sum_{t=1}^T f_t(\x)$ we have
\begin{eqnarray*}
&& \sum_{t=1}^T f_t(\x_t) - f_t(\x_*) \leq  a \sqrt{T} = \O(T^{1/2}), \;\text{and}\\  
&& \sum_{t=1}^T g_i(\x_t) \leq \sqrt{2\left(F T + a \sqrt{ T}\right)\sqrt{T}\left(\frac{\delta R^2}{a} + \frac{ma}{R^2}\right)} = \O(T^{3/4}).
\end{eqnarray*}
\end{theorem}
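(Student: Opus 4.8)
The plan is to extract from Proposition~\ref{prop:2} a single ``master inequality'', and then read off the regret bound and the constraint-violation bound by plugging in two different values of $\blambda$.

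First I would apply Proposition~\ref{prop:2} with $\x=\x_*$ and an arbitrary $\blambda\in\R_+^m$. Expanding $\L_t(\x_t,\blambda)$ and $\L_t(\x_*,\blambda_t)$ via \eqref{eqn:cox-con-func} and using $g_i(\x_*)\le 0$ (because $\x_*\in\K$) together with $\lambda_t^i\ge 0$, the terms $\lambda_t^i g_i(\x_*)$ drop out, so
$$\sum_{t=1}^T \L_t(\x_t,\blambda)-\L_t(\x_*,\blambda_t)\ \ge\ \sum_{t=1}^T[f_t(\x_t)-f_t(\x_*)]+\sum_{i=1}^m\lambda_i\sum_{t=1}^T g_i(\x_t)-\frac{\delta\eta T}{2}\|\blambda\|^2+\frac{\delta\eta}{2}\sum_{t=1}^T\|\blambda_t\|^2.$$
Chaining this with the upper bound of Proposition~\ref{prop:2}, the term $\sum_t\|\blambda_t\|^2$ appears with coefficient $\tfrac{\delta\eta}{2}$ on the left and $\tfrac{\eta}{2}\big((m+1)G^2+2m\delta^2\eta^2\big)$ on the right; the hypothesis $\delta\ge (m+1)G^2+2m\delta^2\eta^2$ is exactly what makes the left coefficient at least the right one, so this term can be discarded. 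What survives, after moving $\tfrac{\delta\eta T}{2}\|\blambda\|^2$ to the right, is the master inequality
$$\sum_{t=1}^T[f_t(\x_t)-f_t(\x_*)]+\sum_{i=1}^m\lambda_i\sum_{t=1}^T g_i(\x_t)\ \le\ \frac{R^2}{2\eta}+\frac{\eta T}{2}\big((m+1)G^2+2mD^2\big)+\frac{\|\blambda\|^2}{2\eta}+\frac{\delta\eta T}{2}\|\blambda\|^2,$$
valid for every $\blambda\in\R_+^m$.

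For the regret bound, set $\blambda=\mathbf 0$; this kills the constraint sum and the last two terms, leaving $\sum_t[f_t(\x_t)-f_t(\x_*)]\le \tfrac{R^2}{2\eta}+\tfrac{\eta T}{2}\big((m+1)G^2+2mD^2\big)$. Since $a=R\sqrt{(m+1)G^2+2mD^2}$ and $\eta=R^2/(a\sqrt T)$, each of the two summands equals $\tfrac12 a\sqrt T$, giving $\sum_t[f_t(\x_t)-f_t(\x_*)]\le a\sqrt T$. For the constraint violation, fix $i\in[m]$ and assume $\sum_t g_i(\x_t)>0$ (else there is nothing to prove). Take $\blambda=\mu\,\mathbf e_i$ with $\mu>0$ to be chosen, so $\|\blambda\|^2=\mu^2$ and the constraint sum becomes $\mu\sum_t g_i(\x_t)$. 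I would eliminate the first term of the master inequality with the cheap lower bound $\sum_t[f_t(\x_t)-f_t(\x_*)]\ge -FT$ (from nonnegativity of the $f_t$ and $f_t(\x_*)\le F$) — this is where $FT$ enters — together with $\tfrac{R^2}{2\eta}+\tfrac{\eta T}{2}\big((m+1)G^2+2mD^2\big)=a\sqrt T$. This yields
$$\mu\sum_{t=1}^T g_i(\x_t)\ \le\ FT+a\sqrt T+\mu^2\Big(\frac{1}{2\eta}+\frac{\delta\eta T}{2}\Big),$$
hence $\sum_t g_i(\x_t)\le \dfrac{FT+a\sqrt T}{\mu}+\mu\Big(\dfrac{1}{2\eta}+\dfrac{\delta\eta T}{2}\Big)$; minimizing the right side over $\mu>0$ (AM-GM) gives $2\sqrt{(FT+a\sqrt T)\big(\tfrac{1}{2\eta}+\tfrac{\delta\eta T}{2}\big)}$, and substituting $\eta=R^2/(a\sqrt T)$ turns $\tfrac{1}{2\eta}+\tfrac{\delta\eta T}{2}$ into $\tfrac{\sqrt T}{2}\big(\tfrac{a}{R^2}+\tfrac{\delta R^2}{a}\big)$, which produces a bound of the claimed form $\sqrt{2(FT+a\sqrt T)\sqrt T\big(\tfrac{\delta R^2}{a}+\tfrac{a}{R^2}\big)}$ (at most the quantity in the theorem since $m\ge 1$). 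As $FT+a\sqrt T=O(T)$ and the remaining factors are $O(1)$, this is $O(T^{3/4})$.

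The main point requiring care is the handling of $\delta$: one must check that a constant $\delta$ with $\delta\ge (m+1)G^2+2m\delta^2\eta^2$ actually exists — this is the quadratic condition $2m\eta^2\delta^2-\delta+(m+1)G^2\le 0$, whose discriminant is nonnegative precisely because $T$ is large enough that $2\sqrt2\,\eta(m+1)\le 1$ — and, crucially, that the smaller admissible $\delta$ is of order $(m+1)G^2$, i.e.\ \emph{independent of $T$}, so that $\delta R^2/a$ stays bounded and the $O(T^{3/4})$ rate is not degraded. The only other non-mechanical ingredient is the crude regret lower bound $\sum_t[f_t(\x_t)-f_t(\x_*)]\ge -FT$, which is where boundedness and nonnegativity of the losses are used; the rest is substituting $\eta$ and applying AM-GM.
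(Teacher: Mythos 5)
Your proposal is correct and follows essentially the same route as the paper: both start from Proposition~\ref{prop:2}, use $g_i(\x_*)\le 0$ and the condition $\delta\ge(m+1)G^2+2m\delta^2\eta^2$ to discard the $\lambda_t^i g_i(\x_*)$ and $\sum_t\|\blambda_t\|^2$ terms, and then the paper's ``maximize over $\blambda$'' step is exactly your two specializations ($\blambda=\mathbf{0}$ for the regret, $\blambda=\mu\,\mathbf{e}_i$ plus AM--GM for the violation), combined with the same crude lower bound $\sum_t[f_t(\x_t)-f_t(\x_*)]\ge-FT$. Your bound is in fact marginally tighter (you get $1/\eta$ where the paper loosens to $m/\eta$), and your remark on the existence and $T$-independence of an admissible $\delta$ matches the paper's own remark following the theorem.
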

\begin{proof}
We begin by expanding (\ref{eqn:center}) using (\ref{eqn:L}) and rearranging the terms to get 
\begin{align*}
\lefteqn{\sum_{t=1}^T \left[f_t(\x_t) - f_t(\x)\right] + \sum_{i=1}^m \left\{\lambda_i\sum_{t=1}^T g_i(\x_t) - \sum_{t=1}^T \lambda^i_t g_i(\x)\right\} - \frac{\delta\eta T}{2}\|\blambda\|^2} \\
& \leq -\frac{\delta\eta}{2}\sum_{t=1}^T\|\blambda_t\|^2 + \frac{R^2 + \|\blambda\|^2}{2\eta} + \frac{\eta T}{2}\left((m+1)G^2 + 2mD^2\right) \\ &\hspace{5mm}+ \frac{\eta }{2} \left((m +1)G^2 + 2m \delta^2\eta^2\right)\sum_{t=1}^T \|\blambda_t\|^2.
\end{align*}
Since $\delta \geq  (m +1)G^2 + 2m \delta^2\eta^2$, we can drop the $\|\blambda_t\|^2$ terms from both sides of the above inequality and obtain
\begin{eqnarray*}
\lefteqn{\sum_{t=1}^T \left[f_t(\x_t) - f_t(\x)\right] + \sum_{i=1}^m \left\{\lambda_i\sum_{t=1}^T g_i(\x_t) - \left(\frac{\delta\eta T}{2} + \frac{m}{2\eta}\right)\lambda_i^2\right\}}  \;\;\;\;\;\;\;\;\;\;\;\;\;\;\;\;\;\;\;\;\;\;\;\;\\
 & \leq &\sum_{i=1}^m \sum_{t=1}^T \lambda^i_t g_i(\x)  + \frac{R^2}{2\eta} + \frac{\eta T}{2}\left((m+1)G^2 + 2mD^2)\right). \!\!\!\!\!\!\!\!\!\!\!\!
\end{eqnarray*}
The left hand side of above inequality consists of two terms. The first term basically measures the difference between the cumulative loss of the Algorithm \ref{alg:1}  and the optimal solution and the second term includes the constraint functions with corresponding Lagrangian multipliers which will be used to bound the long term violation of the constraints. By taking  maximization for $\blambda$ over the range $(0, +\infty)$, we get
\begin{eqnarray*}
\sum_{t=1}^T \left[f_t(\x_t) - f_t(\x)\right] + \sum_{i=1}^m \left\{\frac{\left[\sum_{t=1}^T g_i(\x_t)\right]_+^2}{2(\delta \eta T + m/\eta)} - \sum_{t=1}^T \lambda^i_t g_i(\x)\right\}  \\ \leq  \frac{R^2}{2\eta} + \frac{\eta T}{2}\left((m+1)G^2 + 2mD^2)\right).
\end{eqnarray*}
Since $\x_* \in \K$, we have $g_i(\x_*) \leq 0, i \in [m]$, and the resulting inequality becomes
\begin{eqnarray*}
\sum_{t=1}^T f_t(\x_t) - f_t(\x_*) + \sum_{i=1}^m \frac{\Big{[}\sum_{t=1}^T g_i(\x_t)\Big{]}_+^2}{2(\delta \eta T + m/\eta)}  \leq  \frac{R^2}{2\eta} + \frac{\eta T}{2}\left((m+1)G^2 + 2mD^2)\right). 
\end{eqnarray*}
The statement of the first part of the theorem follows by using the expression for $\eta$. The second part is proved by  substituting the regret bound by its lower bound as  $\sum_{t=1}^T f_t(\x_t) - f_t(\x_*) \geq - FT$.
\end{proof}
\begin{remark} We observe that the introduction of quadratic regularizer $\delta\eta \|\blambda\|^2/2$ allows us to turn the expression $\lambda_i \sum_{t=1}^T g_i(\x_t)$ into $\left[\sum_{t=1}^T g_i(\x_t)\right]_+^2$, leading to the bound for the violation of the constraints. In addition, the quadratic regularizer defined in terms of $\blambda$ allows us to work with unbounded $\blambda$ because it cancels  the contribution of the $\|\blambda_t\|$ terms from the loss function and the bound on the gradients $\| \nabla_{\x}\L_t(\x, \blambda)\|$. 
Note that the constraint for $\delta$ mentioned in Theorem  \ref{thm:ogd} is equivalent to
\begin{eqnarray}
    \frac{2}{1/(m+1) + \sqrt{(m+1)^{-2} - 8G^2\eta^2}} \leq \delta \leq \frac{1/(m+1) + \sqrt{(m+1)^{-2} - 8 G^2\eta^2}}{4\eta^2}, \label{eqn:delta-condition}
\end{eqnarray}
from which, when $T$ is large enough (i.e., $\eta$ is small enough), we can simply set $\delta = 2 (m+1) G^2$ that will obey  the constraint in (\ref{eqn:delta-condition}).
\end{remark}
By investigating Lemma \ref{lemma:basic-ineq}, it turns out that the boundedness of the gradients is essential to obtain bounds for Algorithm \ref{alg:1} in Theorem \ref{thm:ogd}. Although, at each iteration, $\blambda_t$ is projected onto the $\R_+^m$,  since $\K$ is a compact set and functions $f_t(\x)$ and $g_i(\x), i \in [m]$ are convex, the boundedness of the functions implies that  the gradients are bounded \citep[Proposition 4.2.3]{convex-analysis-book}.
\subsection {An Efficient Algorithm with $\O(T^{3/4})$ Regret Bound and without Violation of Constraints}
In this subsection we generalize Algorithm \ref{alg:1}  such that the constrained are satisfied in a long run.  To create a sequence of solutions $\{\x_t, t \in [T]\}$ that satisfies the long term constraints $\sum_{t=1}^T g_i(\x_t) \leq 0, i \in [m]$, we make two modifications to Algorithm \ref{alg:1}. First, instead of handling all of the $m$ constraints, we consider a single constraint defined as $g(\x) = \max_{i \in [m]} g_i(\x)$. Apparently,  by achieving zero violation  for the constraint $g(\x) \leq 0 $, it is guaranteed that all of the constraints $g_i(\cdot), i \in [m]$ are also satisfied in the long term. Furthermore,  we change Algorithm \ref{alg:1}   by modifying  the definition of $\L_t(\cdot, \cdot)$ as
\begin{eqnarray}
\label{eqn:alg2}
\L_t(\x, \lambda) = f_t(\x) +  \lambda (g(\x) + \gamma)- \frac{\eta \delta}{2} \lambda^2,
\end{eqnarray}
where $\gamma > 0$ will be decided later. This modification is equivalent to considering the constraint $g(\x) \leq -\gamma$, a tighter constraint than $g(\x) \leq 0$. The main idea behind this modification is that by using a tighter constraint in our algorithm, the resulting sequence of solutions will satisfy the long term constraint $\sum_{t=1}^T g(\x_t) \leq 0$, even though the tighter constraint is violated in many trials.

Before proceeding, we state a fact about the Lipschitz continuity of the function $g(\x)$ in the following proposition.
 \begin{proposition}
 \label{prop:lip}
Assume that functions $g_i(\cdot), i \in [m]$ are Lipschitz continuous with constant G. Then, function $g(\x) = \max_{i \in [m]}{ g_i(\x)}$ is Lipschitz continuous with constant $G$, that is,
\[ |g(\x) - g(\x')| \leq G \| \x - \x'\| \; \; \text{for any} \; \x \in \B \; \text{and}  \; \x' \in \B.\]
\end{proposition}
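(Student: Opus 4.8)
The plan is to invoke the standard fact that a pointwise maximum of finitely many functions sharing a common Lipschitz constant is again Lipschitz with that same constant, and to prove it directly by a short chain of inequalities. Fix two arbitrary points $\x, \x' \in \B$. First I would introduce an index $j \in [m]$ attaining the maximum that defines $g(\x)$, i.e.\ $g(\x) = g_j(\x) = \max_{i \in [m]} g_i(\x)$; such a $j$ exists because the maximum is taken over the finite set $[m]$.

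Next, I would combine the Lipschitz continuity of $g_j$ with the trivial observation that $g_j(\x') \le \max_{i \in [m]} g_i(\x') = g(\x')$, yielding
\[
g(\x) = g_j(\x) \le g_j(\x') + G\|\x - \x'\| \le g(\x') + G\|\x - \x'\|,
\]
so that $g(\x) - g(\x') \le G\|\x - \x'\|$. Then, by the symmetric argument — repeating the same reasoning with the roles of $\x$ and $\x'$ interchanged, using an index attaining $g(\x')$ — I obtain $g(\x') - g(\x) \le G\|\x - \x'\|$. Putting the two bounds together gives $|g(\x) - g(\x')| \le G\|\x - \x'\|$, which is exactly the desired statement.

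There is no genuine obstacle in this proof; the only two points that deserve a moment's care are (i) noting explicitly that the maximum over the finite index set $[m]$ is attained, so that the index $j$ is well defined, and (ii) actually writing out the symmetric inequality rather than merely asserting it "by symmetry." Everything else reduces to the two-line inequality chain above.
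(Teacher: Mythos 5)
Your proof is correct. You argue directly on the finite maximum: pick an index $j$ attaining $g(\x)$, apply the Lipschitz bound to $g_j$ alone, dominate $g_j(\x')$ by $g(\x')$, and then symmetrize to get the absolute value. The paper instead first rewrites $g(\x) = \max_{\alpha \in \Delta_m} \sum_{i=1}^m \alpha_i g_i(\x)$ as a maximization over the probability simplex and then uses the inequality $\left|\max_\alpha A(\alpha) - \max_\alpha B(\alpha)\right| \le \max_\alpha |A(\alpha) - B(\alpha)|$ together with convexity of the weights. The two arguments rest on the same underlying fact (the difference of maxima is controlled by the maximum of differences, itself proved by exactly your ``pick the maximizer'' trick), so the mathematical content is the same; yours is the more elementary and self-contained presentation, while the paper's simplex reformulation has the side benefit of setting up the subgradient representation $\partial g(\x) = \sum_i \alpha_i \nabla g_i(\x)$, $\alpha \in \Delta_m$, that is reused later when discussing the condition in (\ref{eqn:assumption}). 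Your two points of care — attainment of the maximum over the finite index set and writing out the symmetric inequality explicitly — are exactly the right ones.
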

\begin{proof} First, we  rewrite  $g(\x) = \max_{i \in [m]} g_i(\x)$ as  $g(\x) =  \max_{\alpha \in \Delta_m} \sum_{i=1}^{m}{\alpha_i g_i(\x)}$ where $\Delta_m$ is the $m$-simplex, that is, $\Delta_m= \{ \alpha \in\R_{+}^m; \sum_{i=1}^{m}{\alpha_i} = 1\}$. Then, we have
\begin{eqnarray*}
|g(\x)-g(\x')| &=&  \left | \max_{\alpha \in \Delta_m} \sum_{i=1}^{m}{\alpha_i g_i(\x)} - \max_{\alpha \in \Delta_m} \sum_{i=1}^{m}{\alpha_i g_i(\x')}\right | \\
& \leq& \max_{\alpha \in \Delta_m}  \left | \sum_{i=1}^{m}{\alpha_i g_i(\x)} -  \sum_{i=1}^{m}{\alpha_i g_i(\x')}\right | \\ &\leq& \max_{\alpha \in \Delta_m}  \sum_{i=1}^{m}{\alpha_i \left | g_i(\x) - g_i(\x')\right |}
 \leq G \|\x-\x'\|,
\end{eqnarray*}
where  the last inequality follows from the Lipschitz continuity of $g_i(\x), i \in [m]$.
\end{proof}
To obtain a zero bound on the violation of constraints in the long run,  we make the following assumption about the constraint function $g(\x)$.
\begin{assumption} Let $\K' \subseteq \K$ be the convex set defined as  $\K' = \{\x \in \R^d: g(\x) + \gamma \leq 0\}$ where $\gamma \geq 0$. We assume that the norm of the gradient of the constraint function $g(\x)$ is lower bounded  at the boundary of $\K'$, that is,
\[
    \min\limits_{g(\x) +\gamma = 0} \|\nabla g(\x)\| \geq \sigma.
\]
\label{assumption:1}
\end{assumption}
A direct consequence of {\bf Assumption 1} is that by reducing the domain $\K$ to $\K'$, the optimal value of the constrained optimization problem $\min_{\x \in \K} f(\x)$ does not change much, as revealed by the following theorem. 
\begin{theorem}\label{thm:jin-1} Let $\x_*$ and $\x_\gamma$ be the optimal solutions to the constrained optimization problems defined as $\min_{g(\x) \leq 0} f(\x)$ and $\min_{g(\x) \leq - \gamma} f(\x)$, respectively, where $f(\x) = \sum_{t=1}^{T}{f_t(\x)}$ and $\gamma \geq 0$. We have
\begin{eqnarray*}
    |f(\x_*) - f(\x_{\gamma})| \leq \frac{G}{\sigma}\gamma T.
\end{eqnarray*}
\end{theorem}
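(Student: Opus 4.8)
The plan is to produce an explicit point that is feasible for the tighter problem $\min_{g(\x)\le -\gamma}f(\x)$ and lies within Euclidean distance $\gamma/\sigma$ of $\x_*$, and then to bound $f(\x_\gamma)-f(\x_*)$ by Lipschitz continuity. First note that $\K'=\{\x:g(\x)+\gamma\le 0\}\subseteq\K$, so every point feasible for the tighter problem is feasible for the original one; hence $f(\x_\gamma)\ge f(\x_*)$, and $|f(\x_*)-f(\x_\gamma)|=f(\x_\gamma)-f(\x_*)$, so it suffices to upper bound this difference. Also, $f=\sum_{t=1}^T f_t$ is Lipschitz on $\B$ with constant $T L_f\le TG$.

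Second, I would take $\p=\Pi_{\K'}(\x_*)$, the Euclidean projection of $\x_*$ onto $\K'$ (well defined since $\K'$ is a nonempty closed convex set, which is implicit in the statement). If $\x_*\in\K'$, then $\p=\x_*$ is feasible for the tighter problem, forcing $f(\x_\gamma)\le f(\x_*)$ and hence equality, so the bound is trivial (this also covers $\gamma=0$). Otherwise $\x_*\notin\K'$, and then $\p$ lies on the boundary of $\K'$; since $g$ is convex and continuous this boundary is contained in $\{g(\x)+\gamma=0\}$, so $g(\p)=-\gamma$. By the first-order optimality condition for the projection and the standard characterization of the normal cone to a sublevel set of a convex function, $\x_*-\p=\mu\v$ for some $\mu\ge 0$ and some subgradient $\v\in\partial g(\p)$ (with $\v=\nabla g(\p)$ where $g$ is differentiable).

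Third, I would combine convexity of $g$ with Assumption~\ref{assumption:1}. Convexity gives $g(\x_*)\ge g(\p)+\v^{\top}(\x_*-\p)=-\gamma+\mu\|\v\|^2$, and since $\x_*\in\K$ implies $g(\x_*)\le 0$, we obtain $\mu\|\v\|^2\le\gamma$. Because $\p$ satisfies $g(\p)+\gamma=0$, Assumption~\ref{assumption:1} gives $\|\v\|\ge\sigma$, whence $\|\x_*-\p\|=\mu\|\v\|\le\gamma/\|\v\|\le\gamma/\sigma$. Finally, $\p\in\K'$ gives $f(\x_\gamma)\le f(\p)\le f(\x_*)+TG\,\|\p-\x_*\|\le f(\x_*)+\tfrac{G}{\sigma}\gamma T$, and together with $f(\x_\gamma)\ge f(\x_*)$ this is exactly the claimed inequality.

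I expect the main obstacle to be the geometric step in the second paragraph: justifying that the projection lands on the surface $\{g=-\gamma\}$ and that the displacement $\x_*-\p$ is aligned with a subgradient of $g$ there. This is the normal-cone characterization of a sublevel set of a convex function, which needs a mild constraint qualification and a little care because $g=\max_{i\in[m]}g_i$ is nonsmooth (handled via subgradients); once this is in place, everything else is a one-line convexity estimate plus the Lipschitz bound on $f$.
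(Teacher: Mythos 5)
Your proof is correct, but it takes a genuinely different route from the paper's. The paper argues through duality: it writes $f(\x_\gamma)$ as the saddle value $\min_{\x\in\B}\max_{\lambda\ge 0}\sum_t f_t(\x)+\lambda(g(\x)+\gamma)$, gets $f(\x_\gamma)\le f(\x_*)+\lambda_\gamma\gamma$ by plugging $\x_*$ into the inner minimization, and then bounds the optimal multiplier via the stationarity condition $-\sum_t\nabla f_t(\x_\gamma)=\lambda_\gamma\nabla g(\x_\gamma)$ together with Assumption~\ref{assumption:1}, yielding $\lambda_\gamma\le GT/\sigma$. You instead run a purely primal perturbation argument: project $\x_*$ onto $\K'$, use the normal-cone characterization of sublevel sets plus convexity of $g$ and $g(\x_*)\le 0$ to show the projection moves $\x_*$ by at most $\gamma/\sigma$, and finish with the $TG$-Lipschitz bound on $f$. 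The two arguments need comparable regularity -- you need a constraint qualification so that $N_{\K'}(\p)$ is the cone on $\partial g(\p)$, while the paper needs existence of a KKT pair for the perturbed problem -- and both handle the nonsmoothness of $g=\max_i g_i$ by reading Assumption~\ref{assumption:1} as a bound on subgradient norms. Your version buys a cleaner geometric statement (the feasible set shrinks by at most $\gamma/\sigma$ in distance, so \emph{any} $TG$-Lipschitz objective changes by at most $\gamma TG/\sigma$) and avoids reasoning about the magnitude of dual variables; the paper's version is shorter once strong duality is granted and directly exhibits the multiplier as the sensitivity constant. One small point worth making explicit in your write-up: when $\x_*\notin\K'$ the normal-cone decomposition forces $\v\neq 0$, so the division by $\|\v\|$ and the appeal to $\|\v\|\ge\sigma$ are legitimate.
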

\begin{proof}
We note that the  optimization problem $\min_{g(\x) \leq - \gamma} f(\x) = \min_{g(\x) \leq - \gamma} \sum_{t=1}^{T}{f_t(\x)}$, can also be written in the minimax form as
\begin{eqnarray}
f(\x_{\gamma}) = \min \limits_{\x \in \B} \max\limits_{\lambda \in \R_+} \sum_{t=1}^T f_t(\x) +  \lambda (g(\x) + \gamma), \label{eqn:minimax}
\end{eqnarray}
where we use the  fact that $\K' \subseteq \K \subseteq \B$. We denote by $\x_{\gamma}$  and $\lambda_{\gamma}$ the optimal solutions to (\ref{eqn:minimax}). We have
\begin{align*}
f(\x_{\gamma})  & =  \min\limits_{\x \in \B} \max\limits_{\lambda \in \R_+} \sum_{t=1}^T f_t(\x) +  \lambda (g(\x) + \gamma) \\  &=  \min\limits_{\x \in \B} \sum_{t=1}^T f_t(\x) +  \lambda_{\gamma} (g(\x) + \gamma) \nonumber \\
& \leq  \sum_{t=1}^T f_t(\x_*) +  \lambda_{\gamma} (g(\x_*) + \gamma) \nonumber 
 \leq \sum_{t=1}^T f_t(\x_*) +  \lambda_{\gamma} \gamma,
\end{align*}
where the second equality follows the definition of the $\x_{\gamma}$ and the last inequality is due to the optimality of $\x_*$, that is, $g(\x_*) \leq 0$.\\
To bound $|f(\x_{\gamma}) - f(\x_*)|$, we need to bound $ \lambda_{\gamma}$. Since $\x_{\gamma}$ is the minimizer of (\ref{eqn:minimax}), from the optimality condition we have
\begin{eqnarray}
\label{eqn:minusf}
-\sum_{t=1}^T \nabla f_t(\x_{\gamma}) = \lambda_{\gamma} \nabla g(\x_\gamma).
\end{eqnarray}
By setting  $\v = -\sum_{t=1}^T \nabla f_t(\x_{\gamma}) $, we can simplify   (\ref{eqn:minusf}) as  $\lambda_\gamma \nabla g(\x_\gamma) =  \v$. From the KKT optimality condition \citep{boyd-convex-opt}, if $g(\x_\gamma) + \gamma < 0$ then we have $\lambda_\gamma = 0$; otherwise according to Assumption \ref{assumption:1} we can bound $\lambda_\gamma$ by
\begin{eqnarray*}
\label{eqn:14}
\lambda_{\gamma} \leq \frac{\|\v\|}{\| \nabla g(\x_\gamma)\|} \leq  \frac{GT}{\sigma}.
\end{eqnarray*}
We complete the proof by applying the fact $f(\x_*) \leq f(\x_{\gamma}) \leq f(\x_*) + \lambda_{\gamma}\gamma$.
\end{proof}
As indicated by Theorem~\ref{thm:jin-1}, when $\gamma$ is small, we expect the difference between two optimal values $f(\x_*)$ and $f(\x_{\gamma})$ to be small. Using the result from Theorem~\ref{thm:jin-1}, in the following theorem, we show that by running Algorithm \ref{alg:1} on the modified convex-concave functions defined in (\ref{eqn:alg2}), we are able to obtain an $O(T^{3/4})$ regret bound and zero bound on the  violation of  constraints in the long run.

\begin{theorem}
\label{thm:no-violation}
Set  $a = 2R/\sqrt{2G^2+3(D^2+b^2)}$, $\eta = R^2/[a\sqrt{T}]$, and $\delta = 4G^2$. Let $\x_t, t \in [T]$ be the sequence of solutions obtained by Algorithm~\ref{alg:1} with functions defined in (\ref{eqn:alg2}) with $\gamma = b T^{-1/4}$ and $b = 2 \sqrt{F(\delta R^2 a^{-1}+a R^{-2})} $. Let  $\x_*$ be the optimal solution to   $\min_{\x \in \K} \sum_{t=1}^T f_t(\x)$. With sufficiently large $T$, that is, $F T \geq a \sqrt{T} $, and under Assumption \ref{assumption:1}, we have $\x_t, t\in [T]$ satisfy the global constraint $\sum_{t=1}^T g(\x_t) \leq 0$ and the regret $\mathfrak{R}_T$ is bounded by
\begin{eqnarray*}
\mathfrak{R}_T = \sum_{t=1}^T f_t(\x_t) - f_t(\x_*) \leq a \sqrt{T} + \frac{b}{\sigma}GT^{3/4} = \O(T^{3/4}).
\end{eqnarray*}
\end{theorem}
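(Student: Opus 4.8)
The plan is to run Algorithm~\ref{alg:1} in the special case $m=1$, with the single shifted constraint $\hat g(\x):=g(\x)+\gamma$ and the regularized payoff (\ref{eqn:alg2}), and then repeat, nearly verbatim, the analysis behind Theorem~\ref{thm:ogd}. First I would re-derive Lemma~\ref{lemma:basic-ineq} and Proposition~\ref{prop:2} with $\blambda$ a scalar $\lambda\in\R_+$: by Proposition~\ref{prop:lip}, $|\hat g(\x)|\le D+\gamma\le D+b$ on $\B$ (using $\gamma=bT^{-1/4}\le b$), and with $(x+y+z)^2\le 3(x^2+y^2+z^2)$ one gets $\|\nabla_\x\L_t(\x_t,\lambda_t)\|^2\le 2G^2(1+\lambda_t^2)$ and $\|\nabla_\lambda\L_t(\x_t,\lambda_t)\|^2\le 3(D^2+b^2)+3\delta^2\eta^2\lambda_t^2$. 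Since $\delta=4G^2$, for $T$ large enough $\delta\ge 2G^2+3\delta^2\eta^2$, so the $\lambda_t^2$ terms cancel against the regularizer $-\tfrac{\delta\eta}{2}\sum_t\lambda_t^2$ exactly as in the proof of Theorem~\ref{thm:ogd}, leaving the constant $2G^2+3(D^2+b^2)$. Summing over $t$, plugging in the competitor $\x=\x_\gamma$ (the minimizer of $\min_{g(\x)\le-\gamma}\sum_t f_t(\x)$, which satisfies $\hat g(\x_\gamma)\le 0$, so the cross term $-\sum_t\lambda_t\hat g(\x_\gamma)\ge 0$ can be dropped), and maximizing over $\lambda\ge 0$, I obtain the master inequality
\[
\sum_{t=1}^T\bigl(f_t(\x_t)-f_t(\x_\gamma)\bigr)+\frac{\bigl[\sum_{t=1}^T\hat g(\x_t)\bigr]_+^2}{2(\delta\eta T+1/\eta)}\;\le\;\rho_T:=\frac{R^2}{2\eta}+\frac{\eta T}{2}\bigl(2G^2+3(D^2+b^2)\bigr),
\]
and the choice $\eta=R^2/(a\sqrt T)$ (with $a$ as in the statement) makes $\rho_T=\O(\sqrt T)$.

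For the regret I would drop the nonnegative $[\,\cdot\,]_+^2$ term and invoke Theorem~\ref{thm:jin-1}, which under Assumption~\ref{assumption:1} gives $\sum_t f_t(\x_\gamma)\le\sum_t f_t(\x_*)+(G/\sigma)\gamma T$. Hence
\[
\mathfrak R_T=\sum_{t=1}^T f_t(\x_t)-f_t(\x_*)\le \rho_T+\frac{G}{\sigma}\gamma T=\O(\sqrt T)+\frac{b}{\sigma}GT^{3/4}=\O(T^{3/4}),
\]
using $\gamma=bT^{-1/4}$.

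For the zero-violation claim I would instead keep the $[\,\cdot\,]_+^2$ term and lower-bound the loss gap by $\sum_t(f_t(\x_t)-f_t(\x_\gamma))\ge -FT$ (as in the second half of the proof of Theorem~\ref{thm:ogd}; the $\x_\gamma$-vs-$\x_*$ correction from Theorem~\ref{thm:jin-1} is of lower order). Then, for $T$ large enough that $\rho_T\le FT$,
\[
\Bigl[\sum_{t=1}^T\hat g(\x_t)\Bigr]_+^2\le 2\bigl(\delta\eta T+1/\eta\bigr)\bigl(\rho_T+FT\bigr)\le 4FT\bigl(\delta\eta T+1/\eta\bigr)=4F\Bigl(\frac{\delta R^2}{a}+\frac{a}{R^2}\Bigr)T^{3/2}=b^2T^{3/2},
\]
by the choices $\eta=R^2/(a\sqrt T)$ and $b=2\sqrt{F(\delta R^2a^{-1}+aR^{-2})}$. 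Therefore $\sum_t\hat g(\x_t)\le bT^{3/4}$, i.e.\ $\sum_t g(\x_t)\le bT^{3/4}-\gamma T=bT^{3/4}-bT^{3/4}=0$; and since $g(\x_t)=\max_i g_i(\x_t)\ge g_j(\x_t)$ for every $j$, each long-term constraint $\sum_t g_j(\x_t)\le 0$ holds as well.

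The step I expect to be the main obstacle is the bookkeeping of the self-referential constants: $a$ is defined through $b$ and $b$ through $a$ (with $\delta=4G^2$ fixed), so one must first argue that this coupled system has a positive, $T$-independent solution --- e.g.\ by a monotonicity/continuity argument, since enlarging $b$ shrinks $a$ which in turn enlarges $b$, while for large $b$ the composed map falls below the diagonal --- and then check that the same constant used to define $a$ is precisely the one that survives the $\lambda_t^2$ cancellation in the master inequality, so that the identities $\rho_T=\O(\sqrt T)$ and the final ``$=b^2T^{3/2}$'' are exact rather than up to factors. One must also verify that ``sufficiently large $T$'' simultaneously controls $\delta\ge 2G^2+3\delta^2\eta^2$, $\rho_T\le FT$, and the condition making $\delta=4G^2$ admissible (the $m=1$ analogue of (\ref{eqn:delta-condition})). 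A minor additional care is that the iterates $\x_t$ lie in $\B$ while $\x_\gamma,\x_*\in\K$, so the uses of $F$ and of Theorem~\ref{thm:jin-1} must be phrased so that only the competitors are required to be feasible.
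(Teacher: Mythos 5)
Your proof follows essentially the same route as the paper's: the same master inequality obtained by rerunning the Theorem~\ref{thm:ogd} analysis on the functions in (\ref{eqn:alg2}), the same comparison to $\x_\gamma$ via Theorem~\ref{thm:jin-1} to pay the $(G/\sigma)\gamma T$ price, and the same choice of $b$ so that $bT^{3/4}-\gamma T=0$ kills the violation. Your remark that $a$ and $b$ are defined circularly (each through the other) is a genuine bookkeeping point the paper silently skips, but it does not alter the structure of the argument.
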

\begin{proof}
Let $\x_{\gamma}$ be the optimal solution to $\min_{g(\x) \leq - \gamma} \sum_{t=1}^T f_t(\x)$. Similar to the proof of Theorem \ref{thm:ogd} when applied to functions in (\ref{eqn:alg2}) we have
\begin{eqnarray*}
\lefteqn{\sum_{t=1}^T f_t(\x_t) - \sum_{t=1}^T f_t(\x) + \lambda \sum_{t=1}^T (g(\x_t)+\gamma)- \left(\sum_{t=1}^T\lambda_t\right)(g(\x) + \gamma) - \frac{\delta\eta T}{2}\lambda^2}  \\
& \leq &-\frac{\delta\eta}{2}\sum_{t=1}^T\lambda_t^2 + \frac{R^2 + \lambda^2}{2\eta} + \frac{\eta T}{2}\left(2G^2+3 (D^2+\gamma^2)\right) + \frac{\eta }{2} \left(2G^2+3\delta^2\eta^2\right)\sum_{t=1}^T \lambda_t^2.
\end{eqnarray*}
By setting  $\delta \geq 2G^2+3\delta^2\eta^2$ which is satisfied by $\delta = 4G^2$, we  cancel the terms including $\lambda_t$ from the right hand side of above inequality.  By maximizing for $\lambda$ over the range $(0, +\infty)$  and noting that $\gamma \leq b$, for the optimal solution $\x_{\gamma}$, we have
\begin{align*}
& \sum_{t=1}^T \left[f_t(\x_t) - f_t(\x_\gamma)\right] + \frac{\Big{[} \sum_{t=1}^T g(\x_t) + \gamma T\Big{]}_{+}^2}{2(\delta \eta T + 1/\eta)}  \leq \frac{R^2}{2\eta} + \frac{\eta T}{2}\left(2G^2 + 3(D^2+b^2)\right),
\end{align*}
which, by optimizing  for $\eta$ and applying the lower bound for the regret as $\sum_{t=1}^T f_t(\x_t) - f_t(\x_\gamma) \geq - FT$, yields the following inequalities 
\begin{eqnarray}
\sum_{t=1}^T f_t(\x_t) - f_t(\x_{\gamma}) \leq a\sqrt{T}  \label{eqn:19} 
\end{eqnarray}
and
\begin{eqnarray}
\sum_{t=1}^T g(\x_t) \leq \sqrt{2\left( FT + a\sqrt{T} \right)\sqrt{T}\left(\frac{\delta R^2}{a} + \frac{ a}{R^2} \right)} - \gamma T, \label{eqn:20}
\end{eqnarray}
for the regret and the violation of the constraint, respectively. Combining (\ref{eqn:19}) with the result of  Theorem~\ref{thm:jin-1} results in
$    \sum_{t=1}^T f_t(\x_\gamma) \leq \sum_{t=1}^T f_t(\x_*) + a \sqrt{T} +(G/\sigma)\gamma T$.
By choosing $\gamma = b T^{-1/4}$ we attain the desired regret bound as  
\[
\sum_{t=1}^T f_t(\x_t) - f_t(\x_*) \leq a \sqrt{T} + \frac{bG}{\sigma}T^{3/4} = O(T^{3/4}).
\]
To obtain the bound on the violation of constraints,  we note that in (\ref{eqn:20}), when $T$ is sufficiently large, that is, $FT \geq a \sqrt{T}$, we have  $\sum_{t=1}^T g(\x_t) \leq 2 \sqrt{F(\delta R^2 a^{-1}+a R^{-2})} T^{3/4} - b T^{3/4}$. Choosing $b = 2 \sqrt{F(\delta R^2 a^{-1}+a R^{-2})} T^{3/4}$ guarantees the zero bound on the violation of constraints as claimed.
\end{proof}

\section{A Mirror Prox Based Approach}
\label{sec:mirror}
The  bound for the violation of constraints for Algorithm~\ref{alg:1} is unsatisfactory since it is significantly worse than $\O(\sqrt{T})$. In this section, we pursue a different approach that
is based on the mirror prox method in \citet{nemirovski-2005-prox} to improve the bound for the violation of constraints. The basic idea is that solving (\ref{eqn:saddle}) can be reduced  to the problem of approximating a saddle point $(\x, \blambda) \in \B \times [0, \infty)^m$ by  solving the associated variational inequality. 

We first define an auxiliary function $\F(\x, \blambda)$ as
\begin{eqnarray*}
    \F(\x, \blambda) = \sum_{i=1}^m \left\{\lambda_i g_i(\x) - \frac{\delta \eta}{2} \lambda_i^2\right\}. \label{eqn:f}
\end{eqnarray*}
In order to  successfully apply the mirror prox method, we follow the fact that  any convex domain can be written as an intersection of linear constraints, and make the following assumption:
\begin{assumption}
\label{assumption:linear}
 We assume that $g_i(\x), i \in [m]$ are linear,  that is, $\K = \{\x \in \R^d: g_i(\x) = \x^{\top}\a_i - b_i\leq 0, i \in [m]\}$ where $\a_i \in \R^d$ is a normalized vector with $\|\a_i\| = 1$  and $b_i \in \R$ . \\
\end{assumption}
The following proposition shows that under Assumptions \ref{assumption:linear}, the function $\F(\x, \blambda)$ has Lipschitz continuous gradient, a basis for the application of the mirror prox method.
\begin{proposition}
\label{prop:prox-lip}
Under Assumption \ref{assumption:linear}, $\F(\x, \blambda)$ has Lipschitz continuous gradient, that is,
\begin{eqnarray*}
{\left\|\nabla_{\x}\F(\x, \blambda) - \nabla_{\x'}\F(\x', \blambda')\right\|^2 + \left\|\nabla_{\blambda}\F(\x, \blambda) - \nabla_{\blambda'} \F(\x', \blambda')\right\|^2}   \leq 2(m+\delta^2\eta^2) ({\|\x - \x'\|^2 + \|\blambda - \blambda'\|^2}).
\end{eqnarray*}
\end{proposition}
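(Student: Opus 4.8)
The plan is to exploit the linearity of the $g_i$ to write both partial gradients of $\F$ as affine maps of $(\x,\blambda)$, and then bound the resulting linear operators by elementary norm inequalities together with the normalization $\|\a_i\| = 1$.

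First I would compute the gradients explicitly. Since $g_i(\x) = \x^{\top}\a_i - b_i$, we have $\nabla_{\x}\F(\x,\blambda) = \sum_{i=1}^m \lambda_i \a_i$ and, for each $i$, $\nabla_{\lambda_i}\F(\x,\blambda) = \a_i^{\top}\x - b_i - \delta\eta\lambda_i$. Collecting the $\a_i$ into the $d\times m$ matrix $A = [\a_1,\ldots,\a_m]$ and the $b_i$ into $\b = (b_1,\ldots,b_m)^{\top}$, this reads $\nabla_{\x}\F = A\blambda$ and $\nabla_{\blambda}\F = A^{\top}\x - \b - \delta\eta\blambda$. The key point is that the gradient is affine, so differences depend only on $\x-\x'$ and $\blambda-\blambda'$: namely $\nabla_{\x}\F(\x,\blambda) - \nabla_{\x'}\F(\x',\blambda') = A(\blambda-\blambda')$ and $\nabla_{\blambda}\F(\x,\blambda) - \nabla_{\blambda'}\F(\x',\blambda') = A^{\top}(\x-\x') - \delta\eta(\blambda-\blambda')$.

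Next I would bound the two squared norms. For the $\blambda$-block I would use $\|p+q\|^2 \le 2\|p\|^2 + 2\|q\|^2$ to split off the regularizer term, obtaining $\|A^{\top}(\x-\x') - \delta\eta(\blambda-\blambda')\|^2 \le 2\|A^{\top}(\x-\x')\|^2 + 2\delta^2\eta^2\|\blambda-\blambda'\|^2$. Then, since $\|\a_i\| = 1$, I have the operator bounds $\|A\v\|^2 = \|\sum_i v_i\a_i\|^2 \le m\|\v\|^2$ and $\|A^{\top}\w\|^2 = \sum_i (\a_i^{\top}\w)^2 \le m\|\w\|^2$ (both by Cauchy--Schwarz, i.e. $\|A\|^2 \le \|A\|_F^2 = \sum_i\|\a_i\|^2 = m$). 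Putting these together yields
\[
\left\|\nabla_{\x}\F - \nabla_{\x'}\F\right\|^2 + \left\|\nabla_{\blambda}\F - \nabla_{\blambda'}\F\right\|^2 \le 2m\|\x-\x'\|^2 + (m + 2\delta^2\eta^2)\|\blambda-\blambda'\|^2,
\]
and the last step is simply to observe that both coefficients are dominated by $2(m+\delta^2\eta^2)$, which gives the claimed inequality.

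I do not expect any genuine obstacle here; the only care needed is the bookkeeping of constants — in particular checking that the coefficient $m + 2\delta^2\eta^2$ in front of $\|\blambda-\blambda'\|^2$ (the $m$ coming from $\|A\|_F^2$ and the $2\delta^2\eta^2$ from the quadratic regularizer $-\tfrac{\delta\eta}{2}\lambda_i^2$) is at most $2(m+\delta^2\eta^2)$, and likewise that $2m \le 2(m+\delta^2\eta^2)$; both are immediate since $\delta,\eta,m \ge 0$. If a tighter constant were desired one could replace the Frobenius bound by $\|A\|^2 = \lambda_{\max}(AA^{\top})$, but the stated estimate already follows from the crude bound.
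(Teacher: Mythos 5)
Your proof is correct and follows essentially the same route as the paper: write the gradient differences as $A(\blambda-\blambda')$ and $A^{\top}(\x-\x')-\delta\eta(\blambda-\blambda')$, split the latter with $\|p+q\|^2\le 2\|p\|^2+2\|q\|^2$, and bound the operator norm of $A$ by $m$ via the Frobenius norm (the paper phrases this as $\sigma_{\max}^2(A)\le \mathrm{Tr}(AA^{\top})\le m$, which is the same estimate). Your constant bookkeeping, yielding $2m\|\x-\x'\|^2+(m+2\delta^2\eta^2)\|\blambda-\blambda'\|^2$, matches the paper's intermediate bound exactly.
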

\begin{proof}
\begin{eqnarray*}
& & \left\|\nabla_{\x}\F(\x, \blambda) - \nabla_{\x'}\F(\x', \blambda')\right\|^2 + \left\|\nabla_{\blambda}\F(\x, \blambda) - \nabla_{\blambda'} \F(\x', \blambda')\right\|^2 \\
& = & \left\|\sum_{i=1}^m (\lambda_i - \lambda_i') \a_i\right\|^2 + \left\|\sum_{i=1}^m \a_i^{\top}(\x - \x') + \delta \eta \sum_{i=1}^m (\lambda'_i - \lambda_i)\right\|^2 \\
& \leq & \|A^{\top}(\blambda - \blambda')\|^2 + 2\|A(\x - \x')\|^2+ 2\delta^2 \eta^2\|\blambda - \blambda'\|^2 \\
&\leq& 2\sigma_{\max}^2(A)\|\x - \x'\|^2 + (\sigma_{\max}^2(A)+2\delta^2 \eta^2)\|\blambda - \blambda'\|^2.
\end{eqnarray*}
Since
\[ \sigma_{\max}(A) = \sqrt{\lambda_{\max}(AA^{\top})} \leq \sqrt{\text{Tr}(AA^{\top})} \leq \sqrt{m},\]
we have $\sigma_{\max}^2(A) \leq m$, leading to the desired result.
\end{proof}
Algorithm~\ref{alg:2} shows the detailed steps of the mirror prox based algorithm for online convex optimization with long term constraints defined in (\ref{eqn:saddle}). Compared to Algorithm~\ref{alg:1}, there are two key features of Algorithm~\ref{alg:2}. First, it introduces auxiliary variables $\z_t$ and $\bmu_t$ besides the variables $\x_t$ and $\blambda_t$. At each iteration $t$, it first computes the solutions $\x_t$ and $\blambda_t$ based on the auxiliary variables $\z_t$ and $\bmu_t$; it then updates the auxiliary variables based on the gradients computed from $\x_t$ and $\blambda_t$. Second, two different functions are used for updating $(\x_t, \blambda_t)$ and $(\z_t, \bmu_t)$: function $\F(\x, \blambda)$ is used for computing the solutions $\x_t$ and $\blambda_t$, while function $\L_t(\x, \blambda)$ is used for updating the auxiliary variables $\z_t$ and $\bmu_t$. 

\begin{algorithm}[tb]
\caption{Prox Method with Long Term Constraints}
\begin{algorithmic}[1] \label{alg:2}
    \STATE {\bf Input}: constraints $g_i(\x) \leq 0, i \in [m]$, step size $\eta$, and constant $\delta$
    \STATE {\bf Initialization}: $\z_1 = \mathbf{0}$ and $\bmu_1 = \mathbf{0}$
    \FOR{$t = 1, 2, \ldots, T$}
        \STATE Compute the solution for $\x_t$ and $\blambda_t$ as
          \begin{eqnarray*}
            && \x_{t} = \Pi_{\B}\left(\z_t - \eta  \nabla_\x \F(\z_t, \bmu_t)\right)\\ 
            && \blambda_{t} = \Pi_{[0, +\infty)^m}(\bmu_t + \eta \nabla_{\blambda} \F(\z_t, \bmu_t))
           \end{eqnarray*}
        \STATE Submit solution $\x_t$
        \STATE Receive the convex function $f_t(\x)$ and experience loss $f_t(\x_t)$
        \STATE Compute $\L_t(\x, \blambda) = f_t(\x) + \F(\x, \blambda) = f_t(\x) + \sum_{i=1}^m \left\{ \lambda_i g_i(\x) - \frac{\delta \eta}{2} \lambda_i^2\right\} $
        \STATE Update $\z_t$ and $\bmu_t$ by
        \begin{eqnarray*}
        && \z_{t+1} = \Pi_{\B}\left(\z_t - \eta \nabla_\x \L_t(\x_t, \blambda_t)\right)\\ 
        && \bmu_{t+1} = \Pi_{[0, +\infty)^m}(\bmu_t + \eta \nabla_{\blambda} \L_t(\x_t, \blambda_t))
        \end{eqnarray*}
    \ENDFOR
\end{algorithmic}
\end{algorithm}
Our analysis is based on the  Lemma 3.1 from  \citet{nemirovski-2005-prox} which is restated here for completeness. 
\begin{lemma}
\label{lemma:3.1}
Let $\BD(\x, \x')$ be a Bregman distance function that has modulus $\alpha$ with respect to a norm $\|\cdot\|$, that is, $\BD(\x, \x') \geq \alpha\|\x - \x'\|^2/2$. Given $\u \in \B$, $\a$, and $\b$, we set
\[
    \w = \mathop{\arg\min}\limits_{\x \in \B} \a^{\top}(\x - \u) + \BD(\x, \u), \;  \u_+ = \mathop{\arg\min}\limits_{\x \in \B} \b^{\top}(\x - \u) + \BD(\x, \u).
\]
Then for any $\x \in \B$ and $\eta > 0$, we have
\[
\eta \b^{\top}(\w - \x) \leq \BD(\x, \u) - \BD(\x, \u_+) + \frac{\eta^2}{2\alpha} \|\a - \b\|_*^2 - \frac{\alpha}{2}\left[ \|\w - \u\|^2 + \|\w - \u_+\|^2\right].
\]
\end{lemma}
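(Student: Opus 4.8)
The plan is to prove this by the classical three-point (``prox-lemma'') argument for Bregman proximal maps; note that nothing about the loss functions enters here --- only that $\BD$ is generated by a function $\phi$ that is strongly convex with modulus $\alpha$, so the statement is purely about the two proximal updates. Write $\BD(\x,\x')=\phi(\x)-\phi(\x')-\langle\nabla\phi(\x'),\x-\x'\rangle$ for the generating function $\phi$, and recall the three-point identity, immediate from this definition,
\[
\langle\nabla\phi(\p)-\nabla\phi(\q),\,\x-\p\rangle=\BD(\x,\q)-\BD(\x,\p)-\BD(\p,\q),\qquad \p,\q,\x\in\B .
\]

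First I would record the first-order optimality conditions of the two constrained minimizations. Since $\w$ minimizes $\eta\,\a^{\top}(\x-\u)+\BD(\x,\u)$ over the convex set $\B$ (the step size $\eta$ being carried inside the prox, as in Algorithm~\ref{alg:2}), for every $\x\in\B$ one has $\langle\eta\a+\nabla\phi(\w)-\nabla\phi(\u),\,\x-\w\rangle\ge0$; similarly $\langle\eta\b+\nabla\phi(\u_+)-\nabla\phi(\u),\,\x-\u_+\rangle\ge0$ for every $\x\in\B$. Then I would split the quantity to be bounded as
\[
\eta\,\b^{\top}(\w-\x)=\underbrace{\eta\,\b^{\top}(\u_+-\x)}_{(\mathrm A)}+\underbrace{\eta\,\a^{\top}(\w-\u_+)}_{(\mathrm B)}+\underbrace{\eta\,(\b-\a)^{\top}(\w-\u_+)}_{(\mathrm C)} .
\]
For $(\mathrm A)$, use the optimality inequality of $\u_+$ with test point $\x$ and then the three-point identity with $(\p,\q)=(\u_+,\u)$, giving $(\mathrm A)\le\BD(\x,\u)-\BD(\x,\u_+)-\BD(\u_+,\u)$. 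For $(\mathrm B)$, use the optimality inequality of $\w$ with test point $\u_+$ and then the three-point identity with $\p=\w$, $\q=\u$ and test point $\u_+$, giving $(\mathrm B)\le\BD(\u_+,\u)-\BD(\u_+,\w)-\BD(\w,\u)$. For $(\mathrm C)$, apply Cauchy--Schwarz for the dual pair $\|\cdot\|,\|\cdot\|_*$ followed by Young's inequality with weight $\alpha$: $(\mathrm C)\le\tfrac{\eta^{2}}{2\alpha}\|\a-\b\|_*^{2}+\tfrac{\alpha}{2}\|\w-\u_+\|^{2}$.

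Adding the three estimates, the stray terms $+\BD(\u_+,\u)$ and $-\BD(\u_+,\u)$ cancel, leaving
\[
\eta\,\b^{\top}(\w-\x)\le\BD(\x,\u)-\BD(\x,\u_+)+\tfrac{\eta^{2}}{2\alpha}\|\a-\b\|_*^{2}-\BD(\w,\u)-\Big(\BD(\u_+,\w)-\tfrac{\alpha}{2}\|\w-\u_+\|^{2}\Big).
\]
Finally, invoking the modulus-$\alpha$ property $\BD(\cdot,\cdot)\ge\tfrac{\alpha}{2}\|\cdot\|^{2}$ turns $-\BD(\w,\u)$ into $-\tfrac{\alpha}{2}\|\w-\u\|^{2}$ and controls the last parenthesised expression, yielding the claimed bound (the precise constants are those of Lemma~3.1 of \citet{nemirovski-2005-prox}).

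I expect the only delicate point to be the bookkeeping: choosing the test points correctly in the two optimality inequalities so that the auxiliary Bregman terms telescope, and splitting Young's inequality with exactly the right weight so that the residual $\tfrac{\alpha}{2}\|\w-\u_+\|^{2}$ is absorbed by $\BD(\u_+,\w)$ through the strong-convexity modulus; every other step is a routine rearrangement. Since this statement is precisely Lemma~3.1 of \citet{nemirovski-2005-prox}, one may alternatively simply cite it --- the sketch above is meant to indicate why its hypotheses (a strongly convex, ``prox-friendly'' $\phi$, and the error measured in the dual norm $\|\cdot\|_*$) are exactly what the mirror-prox analysis in Section~\ref{sec:mirror} will require.
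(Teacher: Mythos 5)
The paper does not actually prove this lemma --- it is restated verbatim from \citet{nemirovski-2005-prox} and used as a black box --- so your three-point/prox argument is supplying something the paper omits, and its overall architecture (optimality conditions at the two prox points, the three-point identity, Young's inequality on the cross term) is the standard and correct way to prove it. Your reading that $\eta$ must sit inside the two $\arg\min$'s is also the right repair of the statement: as literally written, $\w$ and $\u_+$ do not depend on $\eta$ while the conclusion is claimed "for any $\eta>0$", which cannot hold as $\eta\to 0$; carrying $\eta\a$ and $\eta\b$ into the prox steps is exactly how the lemma is invoked in Lemma~\ref{lemma:prox-lemma}. The bounds you derive for (A), (B), (C) are each correct.

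There is, however, one concrete gap at the final step. Summing your three estimates gives
\begin{equation*}
\eta\,\b^{\top}(\w-\x)\le \BD(\x,\u)-\BD(\x,\u_+)+\tfrac{\eta^{2}}{2\alpha}\|\a-\b\|_*^{2}-\BD(\w,\u)-\Bigl(\BD(\u_+,\w)-\tfrac{\alpha}{2}\|\w-\u_+\|^{2}\Bigr),
\end{equation*}
and the modulus-$\alpha$ property only shows that the last parenthesised quantity is nonnegative, so dropping it yields the bound with the single term $-\tfrac{\alpha}{2}\|\w-\u\|^{2}$. It does \emph{not} produce the additional $-\tfrac{\alpha}{2}\|\w-\u_+\|^{2}$ that appears in the statement: the entire strong-convexity credit $\BD(\u_+,\w)\ge\tfrac{\alpha}{2}\|\w-\u_+\|^{2}$ has already been spent absorbing the Young's cross term, and you cannot keep both $\tfrac{\eta^{2}}{2\alpha}\|\a-\b\|_*^{2}$ and $-\tfrac{\alpha}{2}\|\w-\u_+\|^{2}$ by this route. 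If you want to retain both negative quadratic terms, first bound $\|\w-\u_+\|\le \eta\|\a-\b\|_*/\alpha$ (add the two variational inequalities at the test points $\u_+$ and $\w$ and use strong monotonicity of $\nabla\phi$), which turns the cross term into $\eta^{2}\|\a-\b\|_*^{2}/\alpha$ --- i.e., both terms survive at the cost of doubling the constant on the dual-norm term. In fairness, the blame here lies at least partly with the restated lemma: the version you actually prove (one negative term, constant $\tfrac{1}{2\alpha}$) is the provable form, and it is also the only part used downstream, since the proof of Lemma~\ref{lemma:prox-lemma} discards the $\|\w-\u_+\|^{2}$ contribution and keeps only $\|\z_t-\x_t\|^{2}+\|\bmu_t-\blambda_t\|^{2}$. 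So the gap is real relative to the statement as printed, but harmless for the paper's results; you should either prove the weaker form and note it suffices, or state explicitly which of the two fixes you are adopting rather than asserting that strong convexity "yields the claimed bound."
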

We equip  $\B \times [0, +\infty)^m$ with the norm $\|\cdot\|$ defined as
\[
\|(\z, \bmu)\|^2 = \frac{\|\z\|^2 + \|\bmu\|^2}{2},
\]
where $\|\cdot\|^2$ is the Euclidean norm defined separately for each domain. It is immediately seen that the Bregman distance function defined as
\[
\BD(\z_t, \bmu_t, \z_{t+1}, \bmu_{t+1}) = \frac{1}{2}\|\z_t - \z_{t+1}\|^2 + \frac{1}{2}\|\bmu_t- \bmu_{t+1}\|^2
\]
 is $\alpha = 1$ modules with respect to the norm $\|\cdot\|$.
 
To analyze the mirror prox  algorithm, we begin with a simple lemma which is the direct application of Lemma \ref{lemma:3.1} when applied to the updating rules of Algorithm \ref{alg:3}.
 \begin{lemma}
 \label{lemma:prox-lemma}
 If $\eta(m+\delta^2 \eta^2) \leq \frac{1}{4}$ holds, we have
\begin{align*}
& \L_t(\x_t, \blambda) - \L_t(\x, \blambda_t) \\ & \leq \frac{\|\x - \z_t\|^2 - \|\x - \z_{t+1}\|^2}{2\eta} + \frac{\|\blambda- \bmu_t\|^2 - \| \blambda- \bmu_{t+1}\|^2}{2\eta} + \eta\|\nabla f_t(\x_t)\|^2.
\end{align*}
\end{lemma}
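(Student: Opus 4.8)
The plan is to read the two updates of Algorithm~\ref{alg:2} as the pair of prox-mappings in Lemma~\ref{lemma:3.1}, carried out on the product domain $\B\times[0,+\infty)^m$ with the stacked variable, and then to use the splitting $\L_t=f_t+\F$ so that the smoothness of $\F$ (Proposition~\ref{prop:prox-lip}) absorbs the error incurred by evaluating the gradient at the auxiliary point $(\z_t,\bmu_t)$ instead of at $(\x_t,\blambda_t)$.

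First I would reduce the saddle-point gap to a single linear term. Exactly as in the derivation of \eqref{eqn:convconc}, convexity of $\L_t(\cdot,\blambda)$ and concavity of $\L_t(\x,\cdot)$ give, for every $(\x,\blambda)\in\B\times\R_+^m$,
\[
\L_t(\x_t,\blambda)-\L_t(\x,\blambda_t)\le (\x_t-\x)^\top\nabla_\x\L_t(\x_t,\blambda_t)-(\blambda_t-\blambda)^\top\nabla_{\blambda}\L_t(\x_t,\blambda_t).
\]
Stacking variables as $\w=(\x_t,\blambda_t)$ and writing $\b=(\nabla_\x\L_t(\x_t,\blambda_t);-\nabla_{\blambda}\L_t(\x_t,\blambda_t))$ for the associated saddle field, the right-hand side is $\b^\top(\w-(\x,\blambda))$, which is precisely the quantity controlled by Lemma~\ref{lemma:3.1}.

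Next I would instantiate Lemma~\ref{lemma:3.1} with $\u=(\z_t,\bmu_t)$, $\u_+=(\z_{t+1},\bmu_{t+1})$, $\a=(\nabla_\x\F(\z_t,\bmu_t);-\nabla_{\blambda}\F(\z_t,\bmu_t))$, and $\BD$ the Euclidean Bregman distance on the product space, which has modulus $\alpha=1$ with respect to the norm introduced above. Because $\L_t=f_t+\F$ with $f_t$ independent of $\blambda$, the update for $(\x_t,\blambda_t)$ in Algorithm~\ref{alg:2} is exactly the prox-mapping of $\a$ from $\u$, and the update for $(\z_{t+1},\bmu_{t+1})$ is exactly the prox-mapping of $\b$ from $\u$ (with the step size $\eta$ folded in as in the lemma). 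Combining the lemma's conclusion with the previous display gives
\[
\eta\bigl[\L_t(\x_t,\blambda)-\L_t(\x,\blambda_t)\bigr]\le \BD(\x,\u)-\BD(\x,\u_+)+\tfrac{\eta^2}{2}\|\a-\b\|_*^2-\tfrac12\bigl(\|\w-\u\|^2+\|\w-\u_+\|^2\bigr),
\]
and $\BD(\x,\u)-\BD(\x,\u_+)$ is, by definition, $\tfrac12(\|\x-\z_t\|^2-\|\x-\z_{t+1}\|^2)+\tfrac12(\|\blambda-\bmu_t\|^2-\|\blambda-\bmu_{t+1}\|^2)$, i.e. $\eta$ times the first two terms of the claimed bound.

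Finally I would estimate $\|\a-\b\|_*^2$. Using $\L_t=f_t+\F$ and $\nabla_{\blambda}f_t\equiv\mathbf{0}$,
\[
\b-\a=(\nabla f_t(\x_t);\mathbf{0})+\bigl(\nabla_\x\F(\x_t,\blambda_t)-\nabla_\x\F(\z_t,\bmu_t);\;-(\nabla_{\blambda}\F(\x_t,\blambda_t)-\nabla_{\blambda}\F(\z_t,\bmu_t))\bigr),
\]
so by $\|u+v\|^2\le 2\|u\|^2+2\|v\|^2$ and Proposition~\ref{prop:prox-lip}, $\|\a-\b\|_*^2$ is bounded by a multiple of $\|\nabla f_t(\x_t)\|^2$ plus a multiple of $(m+\delta^2\eta^2)\|\w-\u\|^2$. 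Substituting this back, the net coefficient of $\|\w-\u\|^2$ is $O\bigl(\eta^2(m+\delta^2\eta^2)\bigr)-\tfrac12$, which is nonpositive exactly under the hypothesis $\eta(m+\delta^2\eta^2)\le\tfrac14$; discarding this term together with the nonnegative $\|\w-\u_+\|^2$ term and dividing by $\eta$ leaves the Bregman telescoping terms plus the residual $\eta\|\nabla f_t(\x_t)\|^2$, which is the assertion. The only delicate point — and the whole reason the extrapolation/prox step is used — is this last cancellation: one must track the constants so that the quadratic error from evaluating the $\F$-field at $(\z_t,\bmu_t)$ rather than $(\x_t,\blambda_t)$ is genuinely dominated by the negative quadratic terms $-\tfrac12(\|\w-\u\|^2+\|\w-\u_+\|^2)$ furnished by Lemma~\ref{lemma:3.1}; this is what prevents the smooth part $\F$ (hence the constraint functions) from leaving an un-telescoped gradient term, in contrast to the genuinely nonsmooth $f_t$, whose gradient survives as $\eta\|\nabla f_t(\x_t)\|^2$.
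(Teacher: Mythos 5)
Your proposal is correct and follows essentially the same route as the paper's proof: the same instantiation of Lemma~\ref{lemma:3.1} with $\u=(\z_t,\bmu_t)$, $\u_+=(\z_{t+1},\bmu_{t+1})$, $\w=(\x_t,\blambda_t)$ and the fields $\a$, $\b$, the same splitting $\L_t=f_t+\F$ combined with Proposition~\ref{prop:prox-lip} to bound $\|\a-\b\|_*^2$, and the same cancellation of the quadratic distance terms under the hypothesis $\eta(m+\delta^2\eta^2)\le\tfrac14$. Only the constant bookkeeping in your final step (whether the surviving coefficient is $2\eta(m+\delta^2\eta^2)-\tfrac12$ or carries an extra factor of $\eta$) is stated loosely, but this does not affect the validity of the argument.
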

\begin{proof}
To apply Lemma \ref{lemma:3.1}, we define $\u$, $\w$, $\u_+$, $\a$ and $\b$ as follows
\begin{eqnarray*}
& & \u = (\z_t, \bmu_t), \u_+ = (\z_{t+1}, \bmu_{t+1}), \w = (\x_t, \blambda_t), \\
& & \a = (\nabla_\x \F(\z_t, \bmu_t), - \nabla_{\blambda} \F(\z_t, \bmu_t)), \b = (\nabla_\x \L_t(\x_t, \blambda_t), - \nabla_{\blambda} \L_t(\x_t, \blambda_t)).
\end{eqnarray*}
Using Lemmas \ref{lemma:basic-ineq} and  \ref{lemma:3.1}, we have

\begin{eqnarray*}
\lefteqn{\L_t(\x_t, \blambda) - \L_t(\x, \blambda_t) - \frac{\|\x - \z_t\|^2 - \|\x - \z_{t+1}\|^2}{2\eta} -\frac{\|\blambda-\bmu_t\|^2 - \|\blambda-\bmu_{t+1}\|^2}{2\eta}} \\
& \leq & \underbrace { \frac{\eta}{2}\left\{ \left\|\nabla_{\x} \F(\z_t, \bmu_t) - \nabla_\x \L_t(\x_t, \blambda_t)\right\|^2 + \left\|\nabla_{\blambda}\F(\z_t, \bmu_t) - \nabla_{\blambda} \L_t(\x_t, \blambda_t)\right\|^2\right\} }_{\textbf{I}} \\ & \hspace{5mm}- & \underbrace { \frac{1}{2}\left\{\|\z_t - \x_t\|^2 + \|\bmu_t - \blambda_t\|^2 \right\}}_{\textbf{II}}. \\
\end{eqnarray*}
By expanding the gradient terms and applying the inequality $(a+b)^2 \leq 2(a^2+b^2)$, we upper bound (\textbf{I}) as:

\begin{align}
\label{eqn:a}
 (\textbf{I})  & = \frac{\eta}{2}  \{ 2 \| \nabla f_t(\x_t)\|^2 + 2  \|\nabla_{\x} \F(\z_t, \bmu_t) - \nabla_\x \F(\x_t, \blambda_t)\|^2 + \|\nabla_{\blambda}\F(\z_t, \bmu_t) - \nabla_{\blambda} \F(\x_t, \blambda_t)\|^2 \}  \nonumber \\ 
& \leq  \eta  \| \nabla f_t(\x_t) \|^2 + \eta \left\{ \| \nabla_{\x} \F(\z_t, \bmu_t)-\nabla_\x \F(\x_t, \blambda_t)\|^2  +  \|\nabla_{\blambda} \F(\x_t, \blambda_t)- \nabla_{\blambda} \F(\x_t, \blambda_t)\|^2 \right\} \nonumber \\
& \leq  \eta  \| \nabla f_t(\x_t)\|^2 +  2 \eta (m+\delta^2 \eta^2) \left \{ \| \z_t - \x_t \|^2 + \| \bmu_t - \blambda_t \|^2 \right \},
\end{align}
where  the last  inequality follows from Proposition \ref{prop:prox-lip}.
Combining (\textbf{II}) with (\ref{eqn:a}) results in
\begin{eqnarray*}
\lefteqn{\L_t(\x_t, \blambda) - \L_t(\x, \blambda_t) - \frac{\|\x - \z_t\|^2 - \|\x - \z_{t+1}\|^2}{2\eta} -\frac{\|\blambda-\bmu_t\|^2 - \|\blambda-\bmu_{t+1}\|^2}{2\eta}} \\
& \leq & \eta \|\nabla f_t(\x_t)\|_2^2 + \left(2\eta(m+\delta^2 \eta^2) - \frac{1}{2}\right)\left\{\|\z_t - \x_t\|^2 + \|\bmu_t - \blambda_t\|_2^2 \right\}.
\end{eqnarray*}
We complete the proof by rearranging the terms and setting $\eta(m+\delta^2 \eta^2) \leq \frac{1}{4}$. 
\end{proof}
\begin{theorem}
\label{thm:12}
Set $\eta = T^{-1/3}$ and $\delta = T^{-2/3}$. Let $\x_t, t \in [T]$ be the sequence of solutions obtained by Algorithm~\ref{alg:2}.  Then for $T \geq 164(m+1)^3$ we have
\begin{eqnarray*}
\sum_{t=1}^T f_t(\x_t) - f_t(\x_*) \leq  \O(T^{2/3})\;\; \text{and}\;\;\sum_{t=1}^T g_i(\x_t) \leq \O(T^{2/3}).
\end{eqnarray*}
\end{theorem}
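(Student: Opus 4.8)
The plan is to rerun the analysis of Theorem~\ref{thm:ogd} almost verbatim, but starting from Lemma~\ref{lemma:prox-lemma} instead of Lemma~\ref{lemma:basic-ineq}. The whole improvement comes from the gradient remainder: in Lemma~\ref{lemma:prox-lemma} it is the harmless term $\eta\|\nabla f_t(\x_t)\|^2 \le \eta G^2$, with no dependence on $\|\blambda_t\|$, whereas in Lemma~\ref{lemma:basic-ineq} the remainder carries a $\|\blambda_t\|^2$ that had to be absorbed by forcing $\delta=\Theta(1)$. Freed of that, we may take $\delta$ as small as $T^{-2/3}$, which is exactly what turns the factor $\delta\eta T$ appearing in the violation bound into a constant rather than a growing quantity. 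First I would verify the hypothesis $\eta(m+\delta^2\eta^2)\le 1/4$ of Lemma~\ref{lemma:prox-lemma}: for $\eta=T^{-1/3}$, $\delta=T^{-2/3}$ one has $\delta^2\eta^2=T^{-2}\le 1$, so $\eta(m+\delta^2\eta^2)\le (m+1)T^{-1/3}$, which is $\le 1/4$ whenever $T\ge 64(m+1)^3$, in particular for all $T$ covered by the statement.

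Next I would sum the inequality of Lemma~\ref{lemma:prox-lemma} over $t=1,\dots,T$. The Bregman terms telescope, and using the initialization $\z_1=\bmu_1=\mathbf 0$, the bound $\|\x\|\le R$ on $\B$, and $\|\nabla f_t(\x_t)\|\le G$, I get, for every $\x\in\B$ and $\blambda\in\R_+^m$,
\[
\sum_{t=1}^T\big(\L_t(\x_t,\blambda)-\L_t(\x,\blambda_t)\big)\;\le\;\frac{R^2+\|\blambda\|^2}{2\eta}+\eta T G^2 .
\]
Expanding $\L_t$ via (\ref{eqn:cox-con-func}), setting $\x=\x_*$ (so $g_i(\x_*)\le 0$ and, since $\lambda_t^i\ge 0$, the term $-\sum_i g_i(\x_*)\sum_t\lambda_t^i$ is nonnegative and may be dropped from the left), and also discarding the nonnegative term $\tfrac{\delta\eta}{2}\sum_t\|\blambda_t\|^2$ from the left, leaves
\[
\sum_{t=1}^T\big(f_t(\x_t)-f_t(\x_*)\big)+\sum_{i=1}^m\left(\lambda_i\sum_{t=1}^T g_i(\x_t)-\Big(\tfrac{\delta\eta T}{2}+\tfrac1{2\eta}\Big)\lambda_i^2\right)\le\frac{R^2}{2\eta}+\eta T G^2 .
\]
Maximizing the left side over $\blambda\in\R_+^m$ coordinatewise, through $\max_{\lambda\ge0}\{\lambda c-\tfrac12 q\lambda^2\}=[c]_+^2/(2q)$, produces the master inequality
\[
\sum_{t=1}^T\big(f_t(\x_t)-f_t(\x_*)\big)+\sum_{i=1}^m\frac{\big[\sum_{t=1}^T g_i(\x_t)\big]_+^2}{2(\delta\eta T+1/\eta)}\le\frac{R^2}{2\eta}+\eta T G^2 .
\]

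From here both claims fall out. For the regret, drop the nonnegative violation sum and substitute $\eta=T^{-1/3}$ to get $\mathfrak{R}_T\le \tfrac{R^2}{2}T^{1/3}+G^2T^{2/3}=\O(T^{2/3})$. For the violation of the $j$-th constraint, keep only its term, move the loss gap to the right using the crude lower bound $\sum_t(f_t(\x_t)-f_t(\x_*))\ge -FT$ (valid since $\x_t,\x_*\in\B$ and $f_t$ is Lipschitz on $\B$), and take square roots:
\[
\sum_{t=1}^T g_j(\x_t)\;\le\;\sqrt{2\,(\delta\eta T+1/\eta)\Big(FT+\tfrac{R^2}{2\eta}+\eta T G^2\Big)} .
\]
With $\eta=T^{-1/3}$ and $\delta=T^{-2/3}$ one has $\delta\eta T=1$ and $1/\eta=T^{1/3}$, so the first factor is $\O(T^{1/3})$ and the second is $\O(T)$, whence $\sum_t g_j(\x_t)=\O(T^{2/3})$ for every $j$.

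I do not anticipate a real obstacle: the argument is a faithful transcription of the Theorem~\ref{thm:ogd} proof with a cleaner per-step inequality. The only points that need care are the sign bookkeeping — making sure every deleted term is nonnegative on the side from which it is removed, so that each deletion weakens the inequality — and the verification that the prescribed $\eta=T^{-1/3}$, $\delta=T^{-2/3}$ actually meets the hypothesis of Lemma~\ref{lemma:prox-lemma} for $T\ge 164(m+1)^3$. The genuinely substantive ingredient, already packaged inside Lemma~\ref{lemma:prox-lemma}, is that the prox step eliminates $\blambda_t$ from the error term, which is precisely what permits the tiny $\delta$ and hence the improved $\O(T^{2/3})$ bound on the constraint violation.
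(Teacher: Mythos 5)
Your proposal is correct and follows essentially the same route as the paper: sum Lemma~\ref{lemma:prox-lemma}, expand $\L_t$, maximize over $\blambda$ to get the master inequality, then exploit $\delta\eta T=1$ and the $-FT$ lower bound on the loss gap. The only (harmless) differences are bookkeeping — you carry the tighter per-coordinate penalty $1/(2\eta)$ where the paper writes $m/(2\eta)$, and the correct remainder $\eta TG^2$ where the paper writes $\eta TG^2/2$ — neither of which affects the $\O(T^{2/3})$ conclusions.
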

\begin{proof}
Similar to the proof of Theorem \ref{thm:ogd}, by summing the bound in Lemma \ref{lemma:prox-lemma} for all rounds $t = 1, \cdots, T$, and taking maximization for $\blambda$ we have the following inequality  for any $\x_* \in \K$,

\[
\sum_{t=1}^T \left[f_t(\x_t) - f_t(\x_*)\right] + \sum_{i=1}^m \frac{\left[\sum_{t=1}^T g_i(\x_t)\right]_+^2}{2(\delta \eta T + m/\eta)}  \leq  \frac{R^2}{2\eta} + \frac{\eta T}{2}G^2.
\]
By setting $\delta = \frac{1}{\eta T}$ and using the fact  that $\sum_{t=1}^T f_t(\x_t) - f_t(\x_*) \geq - FT$ we have:
\[ \sum_{t=1}^T \left[f_t(\x_t) - f_t(\x)\right] \leq \frac{R^2}{2\eta} + \frac{\eta T}{2}G^2 \]
and
\[ \sum_{t=1}^{T}{g_i(\x_t)}  \leq \sqrt{(1+\frac{m}{\eta})  \left( \frac{R^2}{\eta} + \eta T G^2 + FT \right )} . \]
Substituting the stated value for  $\eta$,  we get the desired bounds as mentioned in the theorem. Note that the condition $\eta(m+\delta^2 \eta^2) \leq \frac{1}{4}$ in Lemma \ref{lemma:prox-lemma} is satisfied for the stated values of $\eta$ and $\delta$ as long as $T \geq 164(m+1)^3$.
\end{proof}
Using the same trick as Theorem \ref{thm:no-violation}, by introducing appropriate $\gamma$, we will be able to establish the solutions that exactly satisfy the  constraints in the long run with an $\O(T^{2/3})$ regret bound as shown in the following corollary. In the case when all the constraints are linear, that is, $g_i(\x) = \a_i^{\top}\x \leq b_i, i \in [m]$, Assumption 1 is simplified into the following condition,
\begin{eqnarray}
\label{eqn:assumption}
    \min\limits_{\alpha \in \Delta_m} \left\| \sum_{i=1}^m \alpha_i \a_i\right\| \geq \sigma, 
 \end{eqnarray}
where $\Delta_m$ is a $m$ dimensional simplex, that is, $\Delta_m =\{ \alpha \in \R_+^m: \sum_{i=1}^m \alpha_i = 1\}$. This is because $g(\x) = \max_{\alpha \in \Delta_m} \sum_{i=1}^m \alpha_ig_i(\x)$ and as a result, the (sub)gradient of $g(\x)$ can always be written as
$ \partial g(\x) = \sum_{i=1}^m \alpha_i \nabla g_i(\x) = \sum_{i=1}^m \alpha_i \a_i$
where $\alpha \in \Delta_m$.  As an illustrative example, consider the case when the norm vectors $\a_i, i \in [m]$ are linearly independent. In this case the condition mentioned in (\ref{eqn:assumption}) obviously holds which indicates that the assumption does not limit the applicability of the proposed algorithm. 

\begin{corollary}
\label{corollary:prox}
Let  $\eta = \delta = T^{-1/3}$. Let $\x_t, t \in [T]$ be the sequence of solutions obtained by Algorithm~\ref{alg:2}  with $\gamma = b T^{-1/3}$ and $b = 2\sqrt{F}$. With sufficiently large $T$, that is, $FT \geq R^2T^{1/3} + G^2 T^{2/3}$, under Assumptions \ref{assumption:linear} and condition in (\ref{eqn:assumption}), we have $\x_t, t\in [T]$ satisfy the global constraints $\sum_{t=1}^T g_i(\x_t) \leq 0, i \in [m]$ and the regret $\mathfrak{R}_T$ is bounded by
\begin{eqnarray*}
\mathfrak{R}_T = \sum_{t=1}^T f_t(\x_t) - f_t(\x_*) \leq \frac{R^2}{2}T^{1/3}+ \left(\frac{G^2}{2}+ \frac{2G\sqrt{F}}{\sigma}\right)T^{2/3} = \O(T^{2/3}).
\end{eqnarray*}
\end{corollary}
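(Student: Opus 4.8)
The plan is to merge the mirror‑prox analysis behind Theorem~\ref{thm:12} with the constraint‑tightening device already used in Theorem~\ref{thm:no-violation}. I would run Algorithm~\ref{alg:2} not on the original constraints but on the tightened ones, i.e. replace each $g_i(\x)\le 0$ by $g_i(\x)+\gamma\le 0$ (equivalently, as in (\ref{eqn:alg2}), tighten the aggregate constraint to $g(\x)+\gamma\le 0$ with $g(\x)=\max_{i\in[m]}g_i(\x)$), with $\gamma=bT^{-1/3}$ to be pinned down at the end. Since $\gamma$ is a constant, adding it leaves $\nabla_\x\F$ and $\nabla_\blambda\F$ unchanged, so Proposition~\ref{prop:prox-lip} still holds verbatim for the linear family of Assumption~\ref{assumption:linear}, and hence Lemma~\ref{lemma:prox-lemma} applies as long as $\eta(m+\delta^2\eta^2)\le 1/4$.

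Next I would repeat the proof of Theorem~\ref{thm:12} line by line on the shifted problem: sum Lemma~\ref{lemma:prox-lemma} over $t=1,\dots,T$, use $\eta=\delta=T^{-1/3}$ to cancel the $\|\bmu_t\|$/$\|\blambda_t\|$ terms, and maximize over the multiplier associated with $g(\x)+\gamma$. This produces, for the comparator $\x_\gamma:=\arg\min_{g(\x)\le -\gamma}\sum_t f_t(\x)$, an inequality of the form
\[
\sum_{t=1}^T\big[f_t(\x_t)-f_t(\x_\gamma)\big]+\frac{\big[\sum_{t=1}^T g(\x_t)+\gamma T\big]_+^2}{2(\delta\eta T+1/\eta)}\;\le\;\frac{R^2}{2\eta}+\frac{\eta T}{2}G^2 .
\]
From here I extract two consequences exactly as in Theorem~\ref{thm:no-violation}. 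First, substituting the trivial lower bound $\sum_t[f_t(\x_t)-f_t(\x_\gamma)]\ge -FT$ and the values $\eta=\delta=T^{-1/3}$ gives $\sum_t g(\x_t)\le \sqrt{(\cdots)}\,T^{2/3}-\gamma T$; the ``large $T$'' hypothesis $FT\ge R^2T^{1/3}+G^2T^{2/3}$ lets me absorb the lower‑order terms into $FT$, and the choice $b=2\sqrt F$ makes the leading term at most $\gamma T$, so $\sum_t g(\x_t)\le 0$, hence $\sum_t g_i(\x_t)\le 0$ for every $i$ because $g_i\le g$. Second, dropping the nonnegative squared term gives $\sum_t[f_t(\x_t)-f_t(\x_\gamma)]\le \tfrac{R^2}{2}T^{1/3}+\tfrac{G^2}{2}T^{2/3}$. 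Finally I invoke Theorem~\ref{thm:jin-1} with the piecewise‑linear $g=\max_i g_i$, whose Assumption~\ref{assumption:1} constant is exactly the $\sigma$ of (\ref{eqn:assumption}) since every subgradient of $g$ is $\sum_i\alpha_i\a_i$ with $\alpha\in\Delta_m$, to get $\sum_t f_t(\x_\gamma)\le \sum_t f_t(\x_*)+(G/\sigma)\gamma T=\sum_t f_t(\x_*)+(2G\sqrt F/\sigma)T^{2/3}$; adding this to the regret‑against‑$\x_\gamma$ bound yields $\mathfrak R_T\le \tfrac{R^2}{2}T^{1/3}+\big(\tfrac{G^2}{2}+\tfrac{2G\sqrt F}{\sigma}\big)T^{2/3}$.

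The routine part is the summation/maximization bookkeeping, together with verifying that the step‑size condition $\eta(m+\delta^2\eta^2)\le 1/4$ of Lemma~\ref{lemma:prox-lemma} holds for the stated $\eta,\delta$ once $T$ exceeds a polynomial in $m$ (the $T\ge 164(m+1)^3$‑type bound of Theorem~\ref{thm:12}). The one genuinely delicate point is reconciling the two uses of the constraints: the mirror‑prox smoothness estimate (Proposition~\ref{prop:prox-lip}, term $(\mathbf I)$ in Lemma~\ref{lemma:prox-lemma}) needs the constraints to be \emph{linear}, whereas the comparator bound via Theorem~\ref{thm:jin-1} naturally wants the single nonsmooth $g=\max_i g_i$. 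The clean way to keep both happy is to have Algorithm~\ref{alg:2} update the full $\blambda\in\R_+^m$ against the $m$ tightened linear constraints $g_i(\x)+\gamma\le 0$ (so Proposition~\ref{prop:prox-lip} is literally valid), and use $g=\max_i g_i$ only in the final transfer step; carefully tracking how the resulting $m$‑dependence in the denominator $2(\delta\eta T+m/\eta)$ interacts with the absorption inequality $FT\ge R^2T^{1/3}+G^2T^{2/3}$ is what fixes the precise admissible value of $b$, and I expect this constant‑chasing — not any conceptual obstacle — to be the main work.
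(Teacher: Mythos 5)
Your proposal is correct and follows essentially the same route as the paper's own proof: run the mirror-prox analysis of Theorem~\ref{thm:12} on the $\gamma$-tightened constraints to get the key inequality against $\x_\gamma$, split it into the regret and violation consequences via $\sum_t[f_t(\x_t)-f_t(\x_\gamma)]\ge -FT$, and transfer to $\x_*$ with Theorem~\ref{thm:jin-1}, fixing $b=2\sqrt F$ so the large-$T$ hypothesis kills the violation. The "delicate point" you flag — that Proposition~\ref{prop:prox-lip} needs the individual linear $g_i$ while Theorem~\ref{thm:jin-1} wants the single $g=\max_i g_i$ — is a genuine looseness in the paper's own write-up (whose displayed inequality uses the aggregate $g$ with a denominator $1/\eta$ rather than $m/\eta$), and your resolution of running the prox updates on the $m$ linear constraints and invoking the max only in the final comparator step is a sensible, arguably cleaner, way to make the argument airtight.
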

The proof is similar to that of Theorem \ref{thm:no-violation} and we defer it to Appendix \ref{app:corollary:prox}. As indicated by Corollary \ref{corollary:prox}, for any convex domain defined by a  finite number of  halfspaces, that is, Polyhedral set, one can easily replace the projection onto the Polyhedral set with the ball containing the Polyhedral at the price of  satisfying the constraints in the long run and achieving $\O(T^{2/3})$ regret bound. 
\section{Online Convex Optimization with Long Term Constraints under Bandit Feedback for Domain}
\label{sec:bandit}
We now turn to extending  the gradient based convex-concave optimization algorithm discussed in Section \ref{sec:ogd}  to the setting where the learner only receives partial feedback for constraints. More specifically, the  exact definition of  the domain $\K$ is not  exposed to the learner, only that  the solution is within a ball $\B$. Instead, after receiving a solution $\x_t$, the oracle will present the learner with the convex loss function $f_t(\x)$ and the maximum  violation of the constraints  for $\x_t$, that is, $g(\x_t) = \max_{i \in [m]}{ g_i(\x_t)}$.  We remind that the function $g(\x)$ defined in this way is  Lipschitz continuous with constant $G$ as proved  in Proposition~\ref{prop:lip}. In this setting, the convex-concave function defined in (\ref{eqn:L}) becomes as 
\[\L_t(\x, \lambda) = f_t(\x)+ \lambda g(\x)- (\delta \eta /2) \lambda^2.\]

  The mentioned  setting is closely tied to the bandit online  convex optimization. In the bandit setting, in contrast to the full information setting, only the cost of the chosen decision (i.e.,  the incurred loss $f_t(\x_t)$) is revealed to the algorithm, not the function itself.  There is a rich body of literature that deals with the bandit online convex optimization.  In  the seminal papers of \citet{DBLP:conf/soda/FlaxmanKM05} and \citet{DBLP:conf/stoc/AwerbuchK04} it has been shown that one could design algorithms with $\O(T^{3/4})$ regret bound even in the bandit setting where only evaluations of the loss functions are revealed at a single point.  If we specialize to the online bandit optimization of linear loss functions, \citet{DBLP:conf/nips/DaniHK07} proposed an inefficient algorithm with $\O(\sqrt{T})$ regret bound and \citet{DBLP:conf/colt/AbernethyHR08} obtained $\O(\sqrt{T \log T})$ bound by an efficient algorithm if the convex set admits an efficiently computable self-concordant barrier. For general convex loss functions, \citet{DBLP:conf/colt/AgarwalDX10} proposed optimal algorithms in a new bandit setting, in which multiple points can be queried for the cost values.  By using multiple evaluations, they showed that the modified online gradient descent algorithm can achieve $\O(\sqrt{T})$ regret bound in expectation.

\begin{algorithm}[tb]
\caption{ Multipoint Bandit Online Convex Optimization with Long Term Constraints} 

\begin{algorithmic}[1] \label{alg:3}
    \STATE {\bf Input}: constraint $g(\x)$, step size $\eta$, constant $\delta > 0$, exploration parameter $\zeta > 0$, and shrinkage coefficient $\xi$
    \STATE {\bf Initialization}: $\x_1 = \mathbf{0}$ and $\lambda_1 = {0}$
    \FOR{$t = 1, 2, \ldots, T$}
        \STATE Submit solution $\x_t$
        \STATE Select  unit  vector $\u_t$ uniformly  at random
        \STATE Query $g(\x)$ at points $\x_t+\zeta \u_t$ and $\x_t-\zeta \u_t$ and incur average of them as violation of constraints
        \STATE Compute $\tilde{g}_{\x,t} = \nabla f_t(\x_t)+ \lambda_t \left[\frac{d}{2 \zeta} (g(\x_t+\zeta \u_t) - g(\x_t-\zeta \u_t)) \u_t \right] $

	\STATE Compute $\tilde{g}_{\lambda,t} =  \frac{1}{2} (g(\x_t+\zeta \u_t) + g(\x_t-\zeta \u_t)) - \eta \delta \lambda_t$

        \STATE Receive the convex function $f_t(\x)$ and experience loss $f_t(\x_t)$
        \STATE Update $\x_t$ and $\lambda_t$ by
        \begin{eqnarray*}
        && \x_{t+1}= \Pi_{(1-\xi)\B} \left(\x_t - \eta \tilde{g}_{\x,t} \right) \\ 
        && \lambda_{t+1} = \Pi_{[0, +\infty)}(\lambda_t + \eta \tilde{g}_{\lambda,t}) \quad
        \end{eqnarray*}
    \ENDFOR
\end{algorithmic}
\end{algorithm}

Algorithm \ref{alg:3} gives a complete description of the proposed algorithm under the bandit setting, which is a slight
modification of Algorithm \ref{alg:1}. Algorithm \ref{alg:3} accesses the constraint function $g(\x)$ at {two points}. To facilitate the analysis, we define
\[  \widehat{\L}_t(\x, \lambda) = f_t(\x) + \lambda \hat{g}(\x) - \frac{\eta \delta}{2} \lambda^2,\]
where $\hat{g}(\x)$ is the smoothed version of $g(\x)$ defined as $\hat{g}(\x) = \mathbb{E}_{\v \in \mathbb{S}} {[} \frac{d}{\zeta} g(\x + \zeta \v) \v{]}$ at point $\x_t$ where $\mathbb{S}$ denotes the  unit sphere centered at the origin. Note that $\hat{g}(\x)$ is Lipschitz continuous with the same constant $G$, and it is always differentiable even though $g(\x)$ is not in our case.

Since we do not have access to  the function $\hat{g}(\cdot)$ to compute $\nabla_{\x}\widehat{\L}(\x, \lambda)$,  we need a way to estimate its gradient  at point $\x_t$. Our gradient estimation  closely follows  the idea in \citet{DBLP:conf/colt/AgarwalDX10} by querying  $g(\x)$ function at two points. The main advantage of using two points to estimate the gradient with respect to one point gradient estimation used in \citet{DBLP:conf/soda/FlaxmanKM05} is that the former has a bounded norm which is independent of $\zeta$ and leads to improved regret bounds.

The gradient estimators for $\nabla_{\x}\widehat{\L}(\x_t, \lambda_t) = \nabla f(\x_t)+ \lambda_t \nabla\hat{g}(\x_t)$ and   $\nabla_{\lambda}\widehat{\L}(\x_t, \lambda_t) = \hat{g}(\x_t) - \delta \eta \lambda_t$  in Algorithm \ref{alg:3} are computed by evaluating the $g(\x)$ function at two random points  around $\x_t$ as
\begin{eqnarray}
\tilde{g}_{\x,t} = \nabla f_t(\x_t)+ \lambda_t \Bigg{[}\frac{d}{2 \zeta} (g(\x_t+\zeta \u_t) - g(\x_t-\zeta \u_t)) \u_t \Bigg{]} \nonumber
\end{eqnarray}
 and
\begin{eqnarray}
\tilde{g}_{\lambda,t} =  \frac{1}{2} (g(\x_t+\zeta \u_t) + g(\x_t-\zeta \u_t)) - \eta \delta \lambda_t,
\nonumber
\end{eqnarray}
where  $\u_t$ is chosen uniformly  at random from the surface of the unit sphere. Using Stock's theorem, \citet{DBLP:conf/soda/FlaxmanKM05} showed that $\frac{1}{2\zeta} (g(\x_t+\zeta \u_t) - g(\x_t-\zeta \u_t)) \u_t$ is a conditionally unbiased estimate of the gradient of  $\hat{g}(\x)$ at point $\x_t$. To make sure that randomized points around $\x_t$ live inside the convex domain $\B$, we need to stay away from the boundary of the set such that the ball of radius $\zeta$ around $\x_t$ is contained in $\B$. In particular, in \citet{DBLP:conf/soda/FlaxmanKM05} it has been shown that for any $\x \in (1 - \xi) \B$ and any unit vector $\u$ it holds that $(\x + \zeta \u) \in \B$ as soon as $\zeta \in [0, \xi r]$.

In order to facilitate the analysis of the Algorithm \ref{alg:3}, we define the convex-concave  function $\H_t(\cdot, \cdot)$ as
\begin{eqnarray}
\label{eqn:h}
 \H_t(\x, \lambda) = \widehat{\L}_t(\x, \lambda) + \left( \tilde{g}_{\x,t} - \nabla_{\x} \widehat{\L}(\x_t, \lambda_t)\right)\x+\left( \tilde{g}_{\lambda,t} - \nabla_{\lambda} \widehat{\L}(\x_t, \lambda_t)\right)\lambda.
\end{eqnarray}
It is easy to check that  $\nabla_{\x} \H(\x_t,  \lambda_t) = \tilde{g}_{\x,t} $ and $\nabla_{\lambda} \H(\x_t, \lambda_t) = \tilde{g}_{\lambda,t} $. By defining functions $\H_t(\x, \lambda)$, Algorithm \ref{alg:3} reduces to Algorithm \ref{alg:1} by doing gradient descent on functions  $\H_t(\x, \lambda)$ except the projection is made onto  the set $(1-\xi) \B$ instead of $\B$.   \\

We begin our analysis by reproducing Proposition \ref{prop:2} for functions $\H_t(\cdot, \cdot)$.
\begin{lemma}
\label{lemma:h}
If the Algorithm \ref{alg:1} is performed over convex set $\K$ with functions ${\H}_t$ defined in (\ref{eqn:h}), then for any $\x \in \K$ we have
\begin{align*} 
\sum_{t=1}^T \H_t(\x_t, \lambda) - \H_t(\x, \lambda_t)  \leq \frac{R^2 + \|\lambda\|_2^2}{2\eta}+  \eta (D^2+G^2) T + \eta (d^2 G^2+\eta^2 \delta^2) \sum_{t=1}^{T}{\lambda_t^2}.
 \end{align*}
\end{lemma}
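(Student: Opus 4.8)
The plan is to mimic exactly the argument that produced Proposition \ref{prop:2}, but applied to the functions $\H_t(\cdot,\cdot)$ rather than $\L_t(\cdot,\cdot)$, using the crucial observation noted just before the lemma: since $\nabla_{\x}\H_t(\x_t,\lambda_t)=\tilde g_{\x,t}$ and $\nabla_{\lambda}\H_t(\x_t,\lambda_t)=\tilde g_{\lambda,t}$, running Algorithm~\ref{alg:1} on $\H_t$ is literally online gradient descent/ascent on $\H_t$ (with the projection onto $(1-\xi)\B$ rather than $\B$). First I would note that $\H_t(\cdot,\lambda)$ is convex in $\x$ and $\H_t(\x,\cdot)$ is concave in $\lambda$ — indeed $\H_t$ differs from $\widehat{\L}_t$ only by terms that are \emph{linear} in $\x$ and in $\lambda$, and $\widehat{\L}_t$ is convex-concave because $\hat g$ is convex (being an average of the convex $g$ over a ball) and the $-\tfrac{\eta\delta}{2}\lambda^2$ term is concave. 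Hence Lemma~\ref{lemma:basic-ineq} applies verbatim to $\H_t$: for any $(\x,\lambda)\in(1-\xi)\B\times\R_+$,
\begin{align*}
\H_t(\x_t,\lambda)-\H_t(\x,\lambda_t) &\leq \tfrac{1}{2\eta}\bigl(\|\x-\x_t\|^2+\|\lambda-\lambda_t\|^2-\|\x-\x_{t+1}\|^2-\|\lambda-\lambda_{t+1}\|^2\bigr)\\
&\quad+\tfrac{\eta}{2}\bigl(\|\tilde g_{\x,t}\|^2+\|\tilde g_{\lambda,t}\|^2\bigr).
\end{align*}

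Next I would sum this over $t=1,\dots,T$, telescoping the first group of terms, using $\x_1=\mathbf 0$ and $\lambda_1=0$ together with $\|\x\|\le R$ to bound the telescoped remainder by $(R^2+\|\lambda\|^2)/(2\eta)$. The remaining work is purely to bound the gradient-norm terms $\|\tilde g_{\x,t}\|^2$ and $\|\tilde g_{\lambda,t}\|^2$. For the $\x$-component, $\tilde g_{\x,t}=\nabla f_t(\x_t)+\lambda_t\bigl[\tfrac{d}{2\zeta}(g(\x_t+\zeta\u_t)-g(\x_t-\zeta\u_t))\u_t\bigr]$; using $(a+b)^2\le 2a^2+2b^2$, $\|\nabla f_t(\x_t)\|\le G$, $\|\u_t\|=1$, and the $G$-Lipschitz continuity of $g$ (Proposition~\ref{prop:lip}), which gives $|g(\x_t+\zeta\u_t)-g(\x_t-\zeta\u_t)|\le 2\zeta G$, so that the bracketed estimator has norm at most $dG$, I get $\|\tilde g_{\x,t}\|^2\le 2G^2+2d^2G^2\lambda_t^2$. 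For the $\lambda$-component, $\tilde g_{\lambda,t}=\tfrac12(g(\x_t+\zeta\u_t)+g(\x_t-\zeta\u_t))-\eta\delta\lambda_t$, and since $|g|\le D$ on $\B$ (using the definition of $D$), $(a-b)^2\le 2a^2+2b^2$ yields $\|\tilde g_{\lambda,t}\|^2\le 2D^2+2\eta^2\delta^2\lambda_t^2$. Plugging these two bounds into the telescoped inequality, collecting the constant terms into $\eta(D^2+G^2)T$ and the $\lambda_t^2$-terms into $\eta(d^2G^2+\eta^2\delta^2)\sum_{t=1}^T\lambda_t^2$, gives exactly the claimed inequality.

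I do not anticipate a genuine obstacle here; the lemma is a mechanical transcription of Proposition~\ref{prop:2}. The only points that require a moment's care are (i) checking that $\H_t$ inherits the convex-concave structure despite the added linear perturbation terms — which is immediate since linear functions are both convex and concave — and (ii) getting the $\zeta$-independence of the bound on $\|\tilde g_{\x,t}\|$, which is the whole reason the two-point estimator is used and follows cleanly from the Lipschitz bound on $g$ rather than from a naive $\tfrac{1}{\zeta}$-type estimate. One should also note that the projection onto $(1-\xi)\B\subseteq\B$ is still nonexpansive, so inequality (\ref{eqn:x-inq}) in the proof of Lemma~\ref{lemma:basic-ineq} continues to hold and the statement is valid for any $\x$ in the (smaller) feasible set, and a fortiori for any $\x\in\K\subseteq(1-\xi)\B$ once $\xi$ is chosen appropriately.
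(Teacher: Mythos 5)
Your proposal is correct and follows essentially the same route as the paper: apply Lemma~\ref{lemma:basic-ineq} to the functions $\H_t$ (whose gradients at $(\x_t,\lambda_t)$ are by construction the estimators $\tilde g_{\x,t}$, $\tilde g_{\lambda,t}$), bound $\|\tilde g_{\x,t}\|^2\le 2(G^2+d^2G^2\lambda_t^2)$ and $\|\tilde g_{\lambda,t}\|^2\le 2(D^2+\eta^2\delta^2\lambda_t^2)$ via the $Gd$ bound on the two-point estimator, and sum over $t$. You derive the $Gd$ bound directly from Lipschitz continuity where the paper cites \citet{DBLP:conf/colt/AgarwalDX10}, and you additionally check convex-concavity of $\H_t$ and the nonexpansiveness of the projection onto $(1-\xi)\B$, but these are the same argument in slightly more detail.
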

\begin{proof}
We have  $\nabla_{\x}\H_t(\x_t, \lambda_t) = \tilde{g}_{\x,t}$  and $\nabla_{\lambda}\H_t(\x_t, \lambda_t) = \tilde{g}_{\lambda,t}$. It is straightforward to show that  $\frac{1}{2\zeta} (g(\x_t+\zeta \u_t) - g(\x_t-\zeta \u_t)) \u_t$ has norm bounded by $Gd$ \citep{DBLP:conf/colt/AgarwalDX10}.
So, the norm of  gradients are bounded as   $\|\tilde{g}_{\x,t}\|_2^2 \leq 2(G^2+ d^2 G^2 \lambda_t^2)$ and $\|\tilde{g}_{\lambda,t}\|_2^2 \leq 2(D^2+\eta^2 \delta^2 \lambda_t^2)$.  Using Lemma \ref{lemma:basic-ineq}, by adding
for all rounds we get the desired inequality.
\end{proof}
The following theorem gives the regret bound and the expected violation of the constraints in the long run for Algorithm \ref{alg:3}.
\begin{theorem}
\label{thm:bandit}
Let $ c =  \sqrt{D^2+G^2}(\sqrt{2}R+ \frac{\sqrt{2}D}{\delta R}) + (\frac{D}{r} + 1)\frac{GD}{r} $.
Set $\eta = R/\sqrt{ 2(D^2+G^2)T}$. Choose $\delta$ such that $\delta \geq 2 (d^2 G^2+\eta^2 \delta^2)$. Let $\zeta = \frac{\delta}{T}$ and $\xi = \frac{\zeta}{r}$. Let $\x_t, t \in [T]$ be the sequence of solutions obtained by Algorithm~\ref{alg:3}. We then have
\begin{eqnarray*}
&& \sum_{t=1}^{T}{  f_t(\x_t) - f_t(\x)} \leq \frac{GD}{r} + c \sqrt{T} = \O(T^{1/2}), \;\text{and}\\
&& \E \Big{[}\sum_{t=1}^{T}{ g(\x_t)} \Big{]}\leq   G \delta+\sqrt{  \Big{(}\frac{\delta R^2+2(D^2+G^2)}{R \sqrt{D^2+G^2}}\Big{)}(\frac{GD}{r} + c \sqrt{T} + F T) \sqrt{T}} = \O(T^{3/4}).
\end{eqnarray*}
\end{theorem}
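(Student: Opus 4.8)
Looking at this theorem, I need to prove regret and constraint-violation bounds for the multipoint bandit Algorithm~\ref{alg:3}, which mimics Algorithm~\ref{alg:1} but uses gradient estimates from two-point queries and projects onto $(1-\xi)\B$.

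\textbf{Proof plan.}

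The plan is to transfer the analysis of Algorithm~\ref{alg:1} (Theorem~\ref{thm:ogd}) to the auxiliary functions $\H_t(\x,\lambda)$, exploiting the fact that running Algorithm~\ref{alg:3} is literally running Algorithm~\ref{alg:1} on $\H_t$ with projection onto $(1-\xi)\B$. First I would start from Lemma~\ref{lemma:h}, which already plays the role of Proposition~\ref{prop:2}. Following the exact template of the proof of Theorem~\ref{thm:ogd}, I expand $\H_t(\x_t,\lambda) - \H_t(\x,\lambda_t)$ using the definition \eqref{eqn:h}, use the condition $\delta \geq 2(d^2G^2+\eta^2\delta^2)$ to cancel all the $\lambda_t^2$ terms from the right-hand side, then maximize over $\lambda \in (0,\infty)$ to convert $\lambda \sum_t \tilde g_{\lambda,t}$-type terms into a squared positive-part term $[\sum_t (\cdot)]_+^2/(2(\delta\eta T + 1/\eta))$. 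This yields, after plugging in $\eta = R/\sqrt{2(D^2+G^2)T}$, a bound of the form $\sum_t [\widehat\L_t(\x_t,\lambda) - \widehat\L_t(\x,\lambda_t)] + (\text{error terms from }\H_t\text{ vs }\widehat\L_t) \leq c'\sqrt T$ for the smoothed Lagrangian $\widehat\L_t$.

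Second, I would take expectations. The key observations are: (i) conditioned on the first $t-1$ rounds, $\E_t[\tilde g_{\x,t}] = \nabla_\x \widehat\L_t(\x_t,\lambda_t)$ and $\E_t[\tilde g_{\lambda,t}] = \nabla_\lambda \widehat\L_t(\x_t,\lambda_t)$ by Stokes' theorem (as cited from \citet{DBLP:conf/soda/FlaxmanKM05}), so the linear correction terms in $\H_t$ vanish in conditional expectation against the fixed comparator $\x$ and against $\lambda$; the term against $\x_t$ and $\lambda_t$ themselves also vanishes since $\x_t,\lambda_t$ are determined before $\u_t$ is drawn. Hence $\E[\sum_t \H_t(\x_t,\lambda)-\H_t(\x,\lambda_t)] = \E[\sum_t \widehat\L_t(\x_t,\lambda)-\widehat\L_t(\x,\lambda_t)]$. (ii) I then move from $\widehat\L_t$ back to $\L_t$ using $|\hat g(\x)-g(\x)| \leq G\zeta$ (the standard smoothing error bound), which with $\zeta = \delta/T$ contributes at most $G\delta$ per the $\lambda$-coupling once summed with the $\lambda$ weight — this is where the additive $G\delta$ in the constraint bound and the role of $\zeta$ appear. (iii) I move from comparing against $\x \in (1-\xi)\B$ to $\x \in \B$ (in particular the true optimum $\x_* \in \K \subseteq \B$), using that shrinking by $\xi$ changes $f_t$ by at most $L_f \xi R \cdot$(stuff) via Lipschitzness, and $\xi = \zeta/r = \delta/(rT)$, contributing a lower-order term absorbed into the constants; this is the source of the $GD/r$ additive constant. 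Combining these gives $\sum_t f_t(\x_t) - f_t(\x) \leq GD/r + c\sqrt T$ by dropping the nonnegative squared-violation term, and taking $\lambda$ to extract the violation term gives $\E[\sum_t g(\x_t)] \leq G\delta + \sqrt{(\cdots)(\cdots + FT)\sqrt T}$ after substituting the regret lower bound $\sum_t f_t(\x_t)-f_t(\x_*) \geq -FT$ exactly as in Theorem~\ref{thm:ogd}.

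\textbf{Main obstacle.} The hard part will be bookkeeping the three layers of approximation cleanly: (a) the estimator-vs-gradient gap encoded in $\H_t$ vs $\widehat\L_t$, handled by taking conditional expectations; (b) the smoothing gap $\widehat\L_t$ vs $\L_t$, handled by $|\hat g - g|\le G\zeta$ and the choice $\zeta=\delta/T$; and (c) the domain-shrinkage gap $(1-\xi)\B$ vs $\B$, handled by Lipschitz continuity and $\xi=\zeta/r$. Each gap must be shown to contribute only $O(1)$ or lower-order terms relative to $\sqrt T$, so that the final constants $c$ and the additive $GD/r$, $G\delta$ come out exactly as stated; in particular one must verify the norm bound $\|\tilde g_{\x,t}\|^2 \le 2(G^2 + d^2G^2\lambda_t^2)$ and $\|\tilde g_{\lambda,t}\|^2 \le 2(D^2+\eta^2\delta^2\lambda_t^2)$ from Lemma~\ref{lemma:h} feeds correctly into the $\delta \ge 2(d^2G^2 + \eta^2\delta^2)$ cancellation, and that $D \le G r/r = \ldots$-type inequalities are used to simplify $c$ into the displayed closed form. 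Assembling all constants and confirming they match the statement is the tedious but routine core; the conceptual content is entirely the reduction via $\H_t$ plus unbiasedness.
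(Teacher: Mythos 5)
Your overall architecture is the same as the paper's: run the Algorithm~\ref{alg:1} analysis on the auxiliary functions $\H_t$, invoke Lemma~\ref{lemma:h}, take expectations to relate $\H_t$ to $\widehat{\L}_t$, and then peel off the smoothing gap ($\widehat{\L}_t$ vs.\ $\L_t$) and the shrinkage gap ($(1-\xi)\B$ vs.\ $\B$). However, there is one concrete step in your plan that is false as stated and whose repair requires an ingredient you never mention. You claim that $\E_t[\tilde{g}_{\lambda,t}] = \nabla_{\lambda}\widehat{\L}_t(\x_t,\lambda_t)$ "by Stokes' theorem," so that all correction terms in $\H_t$ vanish in conditional expectation. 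Stokes' theorem gives unbiasedness only for the $\x$-component: $\E_t\bigl[\tfrac{d}{2\zeta}(g(\x_t+\zeta\u_t)-g(\x_t-\zeta\u_t))\u_t\bigr]$ equals the gradient of the \emph{ball}-smoothed function $\hat{g}$. By contrast, $\E_t\bigl[\tfrac12(g(\x_t+\zeta\u_t)+g(\x_t-\zeta\u_t))\bigr]$ is the \emph{sphere}-average of $g$ at $\x_t$, which is not $\hat{g}(\x_t)$; the $\lambda$-estimator is therefore biased, and the term $(\lambda_t-\lambda)\bigl(\nabla_{\lambda}\widehat{\L}_t(\x_t,\lambda_t)-\E_t[\tilde{g}_{\lambda,t}]\bigr)$ does not vanish.

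The paper controls this bias by first deriving the uniform bound $|\lambda_t|\le D/(\delta\eta)$ from the update recursion $|\lambda_{t+1}|\le(1-\eta^2\delta)|\lambda_t|+\eta D$, and then using Lipschitzness to get a per-round bias of at most $DG\zeta/(\delta\eta)$; summed over $T$ rounds and with $\zeta$ of order $1/T$ this is a genuine $\Theta(\sqrt{T})$ contribution that appears inside the constant $c$ (it is not a lower-order term, contrary to your expectation that the approximation layers contribute only $O(1)$ or lower order). The same uniform bound $\sum_t\lambda_t\le DT/(\delta\eta)$ is also needed to control the term $(DG\xi+G\zeta)\sum_t\lambda_t$ that arises when you replace $\hat{g}((1-\xi)\x)$ by $g(\x)$ in the comparator; this too is $\Theta(\sqrt{T})$ and accounts for the $(\tfrac{D}{r}+1)\tfrac{GD}{r}$-type piece of $c$. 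Without the a priori bound on $\lambda_t$ your plan cannot close either of these estimates, so you should add that step explicitly; with it, the rest of your outline goes through along the paper's lines.
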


\begin{proof}
Using Lemma \ref{lemma:basic-ineq} for the functions $\widehat{\L}_t(\cdot, \cdot)$ and $\H_t(\cdot, \cdot)$ we have 
$$\widehat{\L}_t(\x_t, \lambda) - \widehat{\L}_t(\x, \lambda_t) \leq (\x_{t} - \x)^{\top}\nabla_\x \widehat{\L}_t(\x_t, \lambda_t) - (\lambda - \lambda_{t})^{\top}\nabla_{\lambda} \widehat{\L}_t(\x_t, \lambda_t),$$ 
and also 
$$\H_t(\x_t, \lambda) - \H_t(\x, \lambda_t) \leq (\x_{t} - \x)^{\top}\tilde{g}_{\x,t} - (\lambda - \lambda_{t})^{\top} \tilde{g}_{\lambda,t}.$$
Subtracting the preceding inequalities, taking expectation, and  summing   for all $t$ from $1$ to $T$ we get
\begin{align}
\label{eqn:landh}
& \E \Bigg{[}\sum_{t=1}^T \widehat{\L}_t(\x_t, \lambda) - \widehat{\L}_t(\x, \lambda_t) \Bigg{]} \nonumber\\
& = \E \Bigg{[}\sum_{t=1}^T \H_t(\x_t, \lambda)-\H_t(\x, \lambda_t)  \Bigg{]}\\ 
& \hspace{5mm} + \E \Bigg{[}\sum_{t=1}^{T}{(\x_{t} - \x)^{\top}(\nabla_\x \widehat{\L}_t(\x_t, \lambda_t) - \E_t[\tilde{g}_{\x_t,t}]) + (\lambda_{t} - \lambda)^{\top}(\nabla_\lambda \widehat{\L}_t(\x_t, \lambda_t) -  \E_t[\tilde{g}_{\lambda_t,t}])\Bigg{]}}. \nonumber
\end{align}
Next we provide an upper bound on the difference between the gradients of two functions. First,  $ \E_t [ \tilde{g}_{\x,t} ]= \nabla_\x \widehat{\L}_t(\x_t, \lambda_t)$, so  $\tilde{g}_{\x, t}$ is an unbiased estimator of $\nabla_{\x} \widehat{\L}_t(\x_t, \lambda_t)$. Considering the update rule for $ \lambda_{t+1}$ we have $|\lambda_{t+1} | \leq (1-\eta^2 \delta) |\lambda_{t}| + \eta D$ which implies that  $ |\lambda_t| \leq \frac{D}{\delta \eta}$ for all $t$. So we obtain
\begin{align}
\label{eqn:lambdaub}
 & (\lambda_{t} - \lambda)^{\top}(\nabla_\lambda \widehat{\L}_t(\x_t, \lambda_t) - \E_t[\tilde{g}_{\lambda_t,t}])  \nonumber\\
& \leq | \lambda_{t} - \lambda | \E_t \Big{[}\|\nabla_\lambda \widehat{\L}_t(\x_t, \lambda_t) - \tilde{g}_{\lambda_t,t} \|_2\Big{]} \nonumber \\
& \leq \frac{D}{\delta \eta}\left | \frac{1}{2} (g(\x_t+\zeta \u_t) + g(\x_t-\zeta \u_t)) - \hat{g}(\x_t)\right | \leq \frac{DG}{\delta \eta}\zeta \|\u_t\| \leq \frac{DG}{\delta \eta} \zeta,
\end{align}
where the last inequality follows from Lipschitz property of the functions $g(\x)$ and $\hat{g}(\x)$ with the same  constant $G$.
Combining the inequalities (\ref{eqn:landh}) and (\ref{eqn:lambdaub})  and using Lemma \ref{lemma:h}, we have
\begin{align*}
 \E \Big{[}\sum_{t=1}^T \widehat{\L}_t(\x_t, \lambda) - \widehat{\L}_t(\x, \lambda_t)\Big{]} \leq \frac{R^2 + \lambda^2}{2\eta}+  \eta (D^2+G^2) T + \eta (d^2 G^2+\eta^2 \delta^2) \sum_{t=1}^{T}{\lambda_t^2} + \frac{D G\zeta}{\delta \eta} T.
\end{align*}
By expanding  the right hand side of above inequality, we obtain
\begin{align*}
\label{eqn:18}
& \sum_{t=1}^T \left[f_t(\x_t) - f_t((1 - \xi)\x)\right] + \lambda \mathbb{E} \Big{[}\sum_{t=1}^{T}{\hat{g}(\x_t)\Big{]} - \E\Big{[}\hat{g}((1 - \xi)\x)} \Big{]}\sum_{t=1}^{T}{\lambda_t} - \frac{\eta \delta T}{2} \lambda^2 + \frac{\eta \delta}{2}\sum_{t=1}^{T}{\lambda_t^2} \\
& \leq   \frac{R^2 + \lambda^2}{2\eta}+  \eta (D^2+G^2) T + \eta (d^2 G^2+\eta^2 \delta^2) \sum_{t=1}^{T}{\lambda_t^2} + \frac{DG \zeta}{\delta \eta} T. \nonumber
\end{align*}
By choosing $ \delta \geq 2 (d^2 G^2+\eta^2 \delta^2) $ we  cancel $\lambda_t^2$ terms from both sides and have
\begin{align}
& \sum_{t=1}^T \left[f_t(\x_t) - f_t((1 - \xi)\x)\right] + \lambda \mathbb{E} \Big{[}\sum_{t=1}^{T}{\hat{g}(\x_t) \Big{]}- \E\Big{[}\hat{g}((1 - \xi)\x)} \Big{]}\sum_{t=1}^{T}{\lambda_t} - \frac{\eta \delta T}{2} \lambda^2  \nonumber \\
& \leq   \frac{R^2 + \lambda^2}{2\eta}+  \eta (D^2+G^2) T +  \frac{D G\zeta}{\delta \eta} T.
\end{align}
By convexity  and Lipschitz property of $f_t(\x)$ and $g(\x)$ we have
\begin{equation}
\label{eqn:fminusx}
f_t((1-\xi)\x) \leq (1-\xi)f_t(\x) + \xi f_t(0) \leq f_t(\x)  +   DG \xi, 
\end{equation}
\begin{equation}
\label{eqn:gminusx}
g(\x) \leq  \hat{g}(\x)+ G \zeta\;, \;\text{and} \; \; \; \hat{g}((1-\xi)\x) \leq  g((1-\xi)\x)+ G \zeta \leq g(\x) + G \zeta+ D G\xi.
\end{equation}
Plugging (\ref{eqn:fminusx}) and (\ref{eqn:gminusx}) back into  (\ref{eqn:18}), for any optimal solution $\x_* \in \K$ we get
\begin{align}
\label{eqn:22}
& \sum_{t=1}^T \left[f_t(\x_t) - f_t(\x)\right] + \lambda \mathbb{E}\Big{[} \sum_{t=1}^{T}{g(\x_t)} \Big{]} - \frac{\eta \delta T}{2} \lambda^2  - \lambda G \zeta T \nonumber \\
& \leq  \frac{R^2 + \lambda^2}{2\eta}+  \eta (D^2+G^2) T +  \frac{D G\zeta}{\delta \eta} T +  DG \xi T + (DG\xi + G \zeta) \sum_{t=1}^{T}{\lambda_t}.
\end{align}
Considering the fact that $\lambda_t \leq \frac{D}{\delta \eta}$ we have $\sum_{t=1}^{T}{\lambda_t} \leq \frac{DT}{\delta \eta}$. Plugging back into the (\ref{eqn:22}) and rearranging the terms we have
\begin{align*}
& \sum_{t=1}^T \left[f_t(\x_t) - f_t(\x)\right] + \lambda \mathbb{E} \Big{[}\sum_{t=1}^{T}{g(\x_t)} \Big{]} - \frac{\eta \delta T}{2} \lambda^2  - \lambda G \zeta T - \frac{\lambda^2}{2 \eta}\\
& \leq  \frac{R^2}{2\eta}+  \eta (D^2+G^2) T +  \frac{DG\zeta}{\delta \eta} T +  DG \xi T + (DG\xi + G \zeta)  \frac{DT}{\delta \eta}. \nonumber
\end{align*}
By setting $\xi = \frac{\zeta}{r}$ and $\zeta = \frac{1}{T}$ we get
\begin{align}
 \sum_{t=1}^T \left[f_t(\x_t) - f_t(\x)\right]  \leq  \frac{R^2}{2\eta}+  \eta (D^2+G^2) T +  \frac{DG \zeta}{\delta \eta} T +  \frac{\zeta DG  T}{r}+ (\frac{D}{r} + 1) \zeta \frac{ DGT}{\delta \eta}, \nonumber
\end{align}
which gives the mentioned regret bound by optimizing for $\eta$. Maximizing for $\lambda$ over the range $(0, +\infty)$ and using $\sum_{t=1}^T f_t(\x_t) - f_t(\x_*) \geq - FT$, yields the following inequality for the violation of  constraints
\begin{eqnarray*}
\frac{\Bigg{[}\E\left[\sum_{t=1}^T g(\x_t)\right] - G \zeta T \Bigg{]}_+^2}{4(\delta \eta T/2+1/2\eta)} \leq \frac{DG}{r} + c \sqrt{T} + F T.
\end{eqnarray*}
Plugging in the stated values of parameters completes the proof.
Note that $\delta = 4 d^2 G^2$ obeys  the condition specified in the theorem.
\end{proof}
\section{Conclusion}
\label{sec:con}
In this study we have addressed the problem of online convex optimization with constraints, where we only need the constraints to be satisfied in the long run. In addition to the regret bound which is the main tool in analyzing the performance of  general online convex optimization algorithms, we defined the bound on the violation of constraints in the long term which measures the cumulative violation of the solutions from the constraints for all rounds. Our setting is applied to solving online convex optimization without projecting the solutions onto the complex convex domain at each iteration,  which may be computationally inefficient for complex domains.  Our strategy is to turn the problem into an online convex-concave optimization problem and apply online gradient descent algorithm to solve it. We have proposed efficient algorithms in three different settings; the violation of constraints is allowed, the constraints need to be exactly satisfied, and finally  we do not have access to the target convex domain except it is bounded by a ball.  Moreover, for domains determined by linear constraints, we used the  mirror prox method, a simple gradient based algorithm for variational inequalities, and obtained  an $\O(T^{2/3})$ bound for both regret and the violation of the constraints.

Our work leaves open a number of interesting directions for future work. In particular it would be interesting to see if it is possible to improve the bounds obtained in this paper, i.e.,  getting an $\O(\sqrt{T})$ bound on the regret and  better bound than $\O(T^{3/4})$  on the violation of constraints for general convex domains. Proving optimal lower bounds for the proposed setting also remains  as an open question. Also, it would be interesting to consider strongly convex loss  or constraint functions. Finally, relaxing the assumption we made to exactly satisfy the constraints in the long run is  an interesting problem  to be investigated.
\acks{The authors would like to thank the Action Editor and three anonymous reviewers for their constructive
comments and helpful suggestions on the original version of this paper. This work was supported in part by National Science Foundation (IIS-0643494) and Office of Navy Research (Award N000141210431).}





\appendix

\section{Proof of Theorem~\ref{thm:a}}
\label{app:thm:a}
We first show that when $\delta < 1$, there exists a loss function and a constraint function such that the violation of constraint is linear in $T$. To see this, we set $f_t(\x) = \w^{\top}\x, t \in [T]$ and $g(\x) = 1 - \w^{\top}\x$. Assume we start with an infeasible solution, that is, $g(\x_1) > 0$ or $\x_1^{\top}\w < 1$. Given the solution $\x_t$ obtained at $t$th trial, using the standard gradient descent approach, we have $\x_{t+1} = \x_t - \eta (1 - \delta)\w$. Hence, if $\x_t^{\top}\w < 1$, since we have $\x_{t+1}^{\top}\w < \x_t^{\top}\w < 1$,  if we start with an infeasible solution, all the solutions obtained over the trails will violate the constraint $g(\x) \leq 0$, leading to a linear number of violation of constraints. Based on this analysis, we assume $\delta > 1$ in the analysis below.

Given a strongly convex loss function $f(\x)$ with modulus $\gamma$, we consider a constrained optimization problem given by
\begin{eqnarray*}
    \min\limits_{g(\x) \leq 0} f(\x), \label{eqn:constrain-opt}
\end{eqnarray*}
which is equivalent to the following unconstrained optimization problem
\[
    \min\limits_{\x} f(\x) + \lambda [g(\x)]_+,
\]
where $\lambda \geq 0$ is the Lagrangian multiplier. Since we can always scale $f(\x)$ to make $\lambda \leq 1/2$, it is safe to assume $\lambda \leq 1/2 < \delta$. Let $\x_*$ and $\x_a$ be the optimal solutions to the constrained optimization problems $\arg\min_{g(\x) \leq 0} f(\x)$ and $\arg\min\limits_{\x} f(\x) + \delta [g(\x)]_+$, respectively. We choose $f(\x)$ such that $\|\nabla f(\x_*)\| > 0$, which leads to $\x_a \neq \x_*$. This holds because according to the first order optimality condition, we have
\[
    \nabla f(\x_*) = -\lambda\nabla g(\x_*), \; \nabla f(\x_a) = -\delta \nabla g(\x_*),
\]
and therefore $\nabla f(\x_*) \neq \nabla f(\x_a)$ when $\lambda < \delta$. Define $\Delta = f(\x_a) - f(\x_*)$. Since $\Delta \geq \gamma\|\x_a - \x_*\|^2/2$ due to the strong convexity of $f(\x)$, we have $\Delta > 0$.

Let $\{\x_t\}_{t=1}^T$ be the sequence of solutions generated by the OGD algorithm that minimizes the modified loss function $f(\x) + \delta [g(\x)]_+$. We have
\begin{eqnarray*}
&&\sum_{t=1}^T f(\x_t) + \delta[g(\x_t)]_+ \geq T\min\limits_{\x} f(\x) + \delta [g(\x)]_+ \\
& = & T(f(\x_a) + \delta[g(\x_a)]_+) \geq T(f(\x_a) + \lambda[g(\x_a)]_+) \\
& = & T(f(\x_*) + \lambda[g(\x_*)]_+) + T(f(\x_a) + \lambda[g(\x_a)]_+ - f(\x_*) - \lambda [g(\x_*)]) \\
& \geq & T\min\limits_{g(\x) \leq 0} f(\x) + T\Delta.
\end{eqnarray*}
As a result, we have
\[
    \sum_{t=1}^T f(\x_t) + \delta[g(\x_t)]_+ - \min\limits_{g(\x) \leq 0} f(\x) = O(T),
\]
implying that either the regret $\sum_{t=1}^T f(\x_t) - Tf(\x_*)$ or the violation of the constraints $\sum_{t=1}^T [g(\x)]_+$ is linear in $T$.

To better understand the performance of penalty based approach, here we analyze the performance of the OGD in solving the  online optimization problem in (\ref{eqn:setting1}). The algorithm is analyzed using the following lemma from \citet{DBLP:conf/icml/Zinkevich03}.

\begin {lemma} 
\label{lemma:basic-ogd}
Let $\x_1, \x_2, \ldots, \x_T$ be the sequence of solutions obtained by applying OGD on the sequence of bounded convex functions $f_1, f_2, \ldots, f_T$. Then, for any  solution $\x_* \in \K$ we have
\[ \sum_{t=1}^{T}{f_t(\x_t)} - \sum_{t=1}^{T}{f_t(\x_*)} \leq \frac{R^2}{2\eta} + \frac{\eta}{2} \sum_{t=1}^{T}{\| \nabla f_t(\x_t)\|^2}. \]
\end{lemma}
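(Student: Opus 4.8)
The plan is to follow Zinkevich's original argument, which rests entirely on the non-expansiveness of the Euclidean projection together with the first-order characterization of convexity. First I would recall the OGD update $\x_{t+1} = \Pi_{\K}(\x_t - \eta\nabla f_t(\x_t))$ and fix an arbitrary comparator $\x_* \in \K$. Since $\x_* \in \K$ and projecting onto a convex set cannot increase the distance to a point of that set, I would write
\[
\|\x_{t+1} - \x_*\|^2 \;\le\; \|\x_t - \eta\nabla f_t(\x_t) - \x_*\|^2 \;=\; \|\x_t - \x_*\|^2 - 2\eta\,\nabla f_t(\x_t)^{\top}(\x_t - \x_*) + \eta^2\|\nabla f_t(\x_t)\|^2,
\]
which, after rearranging, expresses the inner product $\nabla f_t(\x_t)^{\top}(\x_t - \x_*)$ as a telescoping difference of squared distances plus a gradient-norm term.

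Next I would invoke convexity of $f_t$ in the subgradient form $f_t(\x_t) - f_t(\x_*) \le \nabla f_t(\x_t)^{\top}(\x_t - \x_*)$, so that each per-round excess loss is bounded by $\big(\|\x_t - \x_*\|^2 - \|\x_{t+1}-\x_*\|^2\big)/(2\eta) + (\eta/2)\|\nabla f_t(\x_t)\|^2$. Summing over $t = 1,\dots,T$, the distance terms telescope to $\big(\|\x_1 - \x_*\|^2 - \|\x_{T+1}-\x_*\|^2\big)/(2\eta)$; discarding the non-positive $-\|\x_{T+1}-\x_*\|^2$ term and using the initialization $\x_1 = \mathbf{0}$ together with $\x_* \in \K \subseteq \B = R\mathbb{B}$ (so $\|\x_1 - \x_*\| = \|\x_*\| \le R$) yields exactly the claimed bound $\sum_{t=1}^T f_t(\x_t) - \sum_{t=1}^T f_t(\x_*) \le R^2/(2\eta) + (\eta/2)\sum_{t=1}^T \|\nabla f_t(\x_t)\|^2$.

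The argument is essentially routine and there is no genuine obstacle; the only point worth stating carefully is the first inequality, i.e.\ that $\Pi_{\K}$ is non-expansive and in particular $\|\Pi_{\K}(\u) - \x_*\| \le \|\u - \x_*\|$ for every $\x_* \in \K$, which is a standard consequence of the variational inequality defining the projection onto a closed convex set. One should also note that the constant $R^2/(2\eta)$ reflects the particular starting point $\x_1 = \mathbf{0}$; for a general initialization it would be replaced by $\|\x_1 - \x_*\|^2/(2\eta)$, hence by the squared diameter of $\K$ in the worst case.
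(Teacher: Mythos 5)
Your proof is correct and is exactly the standard Zinkevich argument that the paper relies on: it cites this lemma directly from Zinkevich (2003) without reproving it, and the same non-expansiveness-plus-convexity-plus-telescoping chain appears verbatim in the paper's proof of its Lemma~\ref{lemma:basic-ineq}. Your closing remark about the dependence of the constant $R^2/(2\eta)$ on the initialization $\x_1=\mathbf{0}$ is a fair and accurate caveat.
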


We apply OGD to  functions $\fh_t(\x), \; t \in [T]$ defined in (\ref{eqn:fhat}), that is, instead of updating the solution based on the gradient of $f_t(\x)$, we update the solution by the gradient of $\fh_t(\x)$. Using Lemma \ref{lemma:basic-ogd}, by expanding the functions $\fh_t(\x)$ based on (\ref{eqn:fhat}) and considering the fact that $\sum_{i=1}^{m}{[g_i(\x_*)]_{+}^2} = 0$, we get
\begin{eqnarray}
\label{eqn:basic-inq}
\sum_{t=1}^{T}{f_t(\x_t)} - \sum_{t=1}^{T}{f_t(\x_*)} + \frac{\delta}{2} \sum_{t=1}^{T}{\sum_{i=1}^{m}{[g_i(\x)]_{+}^2}} \leq \frac{R^2}{2\eta} + \frac{\eta}{2} \sum_{t=1}^{T}{\| \nabla \fh_t(\x_t)\|^2}. 
\end{eqnarray}
From the definition of $\fh_t(\x)$, the norm of the gradient $ \nabla \fh_t(\x_t)$ is bounded as follows
\begin{eqnarray}
\label{eqn:gupper}
\| \nabla \fh_t(\x)\|^2 = \| \nabla f_t(\x) + \delta \sum_{i = 1}^{m}{[g_i(\x)]_+ \nabla g_i(\x)}\|^2  
  \leq  2 G^2 (1+m\delta^2 D^2),
\end{eqnarray}
where the inequality holds because $(a_1+a_2)^2 \leq 2(a_1^2+a_2^2)$. By substituting (\ref{eqn:gupper}) into the (\ref{eqn:basic-inq}) we have:
\begin{eqnarray}
\label{eqn:5}
\sum_{t=1}^{T}{f_t(\x_t)} - \sum_{t=1}^{T}{f_t(\x_*)} + \frac{\delta}{2} \sum_{t=1}^{T}{\sum_{i=1}^{m}{[g_i(\x_t)]_{+}^2}} \leq \frac{R^2}{2\eta} + \eta G^2 (1+m\delta^2 D^2) T.
\end{eqnarray}
Since $[\cdot]_{+}^2$ is a convex function, from Jensen's inequality and following the fact that 
$\sum_{t=1}^T f_t(\x_t) - f_t(\x_*) \geq - FT$, we have:
 \begin{eqnarray*}
\frac{\delta}{2T}  \sum_{i=1}^{m}{ \left [ \sum_{t=1}^{T}{g_i(\x_t)} \right]_{+}^2 } \leq \frac{\delta}{2}\sum_{i=1}^{m}{ \sum_{t=1}^{T}{[g_i(\x_t)]_{+}^2}}  \leq \frac{R^2}{2\eta} + \eta G^2 (1+m\delta^2 D^2) T + FT.
 \end{eqnarray*}
By minimizing the right hand side of (\ref{eqn:5}) with respect to $\eta$, we get the regret bound as
\begin{eqnarray}
\label{eqn:penalty-regret}
 \sum_{t=1}^{T}{f_t(\x_t)} - \sum_{t=1}^{T}{f_t(\x_*)} \leq RG \sqrt{2(1+m\delta^2D^2)T} = \O(\delta \sqrt{T})
 \end{eqnarray}
and the bound for the violation of constraints as
\begin{eqnarray}
\label{eqn:violation-bound}
\sum_{t=1}^{T}{g_i(\x_t)}  \leq \sqrt{ \left( \frac{R^2}{2\eta} + \eta G^2 (1+m\delta^2 D^2) T + FT \right) \frac{2T}{ \delta} } = \O (T^{1/4} \delta^{1/2} + T \delta^{-1/2}).
 \end{eqnarray}
Examining the bounds obtained in (\ref{eqn:penalty-regret}) and (\ref{eqn:violation-bound}), it turns out that in order to recover $\O(\sqrt{T})$ regret bound, we need to set $\delta$ to be a constant, leading to  $\O(T)$ bound for the violation of constraints in the long run, which is not satisfactory at all. 
\section{Proof of Corollary~\ref{corollary:prox}}
\label{app:corollary:prox}
Let $\x_{\gamma}$ be the optimal solution to $\min_{g(\x) \leq - \gamma} \sum_{t=1}^T f_t(\x)$. Similar to the proof of Theorem \ref{thm:12}, we have
\begin{eqnarray*}
\sum_{t=1}^T \left[f_t(\x_t) - f_t(\x_\gamma)\right] + \frac{\left[ \sum_{t=1}^T g(\x_t) + \gamma T\right]_{+}^2}{2(\delta \eta T + 1/\eta)} \leq \frac{R^2}{2\eta}+\frac{\eta T}{2}G^2. 
\end{eqnarray*}
Using the stated values for the parameters $\eta = \delta = T^{-1/3}$, and applying the fact that $\sum_{t=1}^T f_t(\x_t) - f_t(\x_\gamma) \geq - FT$ we obtain,
\begin{eqnarray}
\label{app:eqn:1}
 \sum_{t=1}^T f_t(\x_t) - f_t(\x_\gamma)  \leq \frac{R^2}{2}T^{1/3} + \frac{G^2}{2}T^{2/3}
 \end{eqnarray}
and
\begin{eqnarray}
\label{app:eqn:2}
\left[\sum_{t=1}^T g(\x_t) + \gamma T\right]_{+}^2 \leq 2\big{(}R^2T^{1/3} + G^2 T^{2/3}+ FT\big{)} T^{1/3}.
\end{eqnarray}
From Theorem~\ref{thm:jin-1}, we have the bound
\begin{eqnarray}
\label{app:eqn:3}
    \sum_{t=1}^T f_t(\x_\gamma) \leq \sum_{t=1}^T f_t(\x_*) + \frac{G}{\sigma}\gamma T.
\end{eqnarray}
Combining inequalities (\ref{app:eqn:1}) and (\ref{app:eqn:3})  with substituting the stated value of $\gamma = b T^{-1/3}$ yields the regret bound as desired. To obtain the bound for the violation of the constraints, from (\ref{app:eqn:2}) we have
\begin{eqnarray*}
\label{app:eqn:4}
 \sum_{t=1}^T g(\x_t) \leq  \sqrt{2\big{(}R^2T^{1/3} + G^2 T^{2/3}+ FT\big{)} T^{1/3}}  - b T^{2/3}. 
 \end{eqnarray*}
For sufficiently large values of $T$, that is, $FT \geq R^2T^{1/3} + G^2 T^{2/3}$ we can simplify above inequality as $ \sum_{t=1}^T g(\x_t) \leq 2 \sqrt{F} T^{2/3} - b T^{2/3}$. By setting $b = 2 \sqrt{F}$ the zero bound on the violation of constraints is guaranteed.
\bibliography{online-long}

\end{document}